\DeclareMathOperator*{\argmin}{argmin}
\newcommand{\abs}[1]{\lvert#1\rvert}
\newcommand{\norm}[1]{\left\|#1\right\|}
\newcommand{\inner}[1]{\langle#1\rangle}
\newcounter{ct}
\newcommand{\ah}{\widehat{a}}
\newcommand{\pis}{\pi^*}
\newcommand{\pih}{\hat{\pi}}
\newcommand{\pip}{\pi^\prime}
\newcommand{\E}{\mathbb{E}}
\newcommand{\X}{\textbf{X}}
\newcommand{\s}{\textbf{S}}
\newcommand{\x}{\textbf{x}}
\newcommand{\node}{\texttt{node}}
\newcommand{\mean}{\texttt{mean}}
\newcommand{\A}{\textbf{A}}
\newcommand{\train}{\texttt{Train}}
\newcommand{\regress}{\texttt{Regress}}
\theoremstyle{plain}
\newtheorem{thm}{Theorem}[section]
\newtheorem{lem}[thm]{Lemma}
\newtheorem{prop}[thm]{Proposition}
\newtheorem{cor}[thm]{Corollary}
\newtheorem{cond}{Condition}
\newtheorem*{lemma_statement}{\textit{Lemma Statement}}
\newtheorem*{theorem_statement}{\textit{Theorem Statement}}
\newtheorem*{prop_statement}{\textit{Proposition Statement}}
\theoremstyle{definition}
\newtheorem{defn}{Definition}[section]
\newtheorem{exmp}{Example}[section]
\theoremstyle{remark}
\newtheorem*{rem}{Remark}
\icmltitlerunning{Smooth Imitation Learning for Online Sequence Prediction}
\begin{document} 
\setlength{\abovedisplayskip}{2pt}
\setlength{\belowdisplayskip}{2pt}
\setlength{\abovedisplayshortskip}{3pt}
\setlength{\belowdisplayshortskip}{3pt}

\twocolumn[
\icmltitle{Smooth Imitation Learning for Online Sequence Prediction}

\icmlauthor{Hoang M. Le}{hmle@caltech.edu}
\icmlauthor{Andrew Kang}{akang@caltech.edu}
\icmlauthor{Yisong Yue}{yyue@caltech.edu}
\icmladdress{California Institute of Technology, Pasadena, CA, USA}
\icmlauthor{Peter Carr}{peter.carr@disneyresearch.com}
\icmladdress{Disney Research, Pittsburgh, PA, USA}
\icmlkeywords{imitation learning, learning reduction, approximate policy iteration}

\vskip 0.3in
]

\begin{abstract} 
We study the problem of smooth imitation learning for online sequence prediction, where the goal is to train a policy that can smoothly imitate demonstrated behavior in a dynamic and continuous environment in response to online, sequential context input. Since the mapping from context to behavior is often complex, we take a learning reduction approach to reduce smooth imitation learning to a regression problem using complex function classes that are regularized to ensure smoothness. We present a learning meta-algorithm that achieves fast and stable convergence to a good policy. Our approach enjoys several attractive properties, including being fully deterministic, employing an adaptive learning rate that can provably yield larger policy improvements compared to previous approaches, and the ability to ensure stable convergence. Our empirical results demonstrate significant performance gains over previous approaches.
\end{abstract} 

\vspace{-0.2in}
\section{Introduction}
\vspace{-0.5em}
In many complex planning and control tasks, it can  be very challenging to explicitly specify a good policy.  For such tasks, the use of machine learning to automatically learn a good policy from observed expert behavior, also known as imitation learning or learning from demonstrations, has proven tremendously useful \cite{abbeel2004apprenticeship,ratliff2009learning,argall2009survey,ross2010efficient,dagger,jain2013learning}.

In this paper, we study the problem of imitation learning for smooth online sequence prediction in a continuous regime. Online sequence prediction is the problem of making  online decisions in response to exogenous input from the environment, and is a special case of reinforcement learning (see Section \ref{sec:problem}).  We are further interested in policies that make smooth predictions in a continuous action space.

Our motivating example is the problem of learning smooth policies for automated camera planning \cite{smooth_camera}: 
determining where a camera should look given environment information (e.g., noisy person detections) and corresponding demonstrations from a human expert.\footnote{Access data at \url{http://www.disneyresearch.com/publication/smooth-imitation-learning/} and code at \url{http://github.com/hoangminhle/SIMILE}.} 
It is widely accepted that a smoothly moving camera is essential for generating aesthetic video \cite{gaddam2015cameraman}.
From a problem formulation standpoint, one key difference between smooth imitation learning and conventional imitation learning is the use of a ``smooth'' policy class (which we formalize in Section \ref{sec:problem}), and the goal now is to mimic expert demonstrations by choosing the best smooth policy. 

The conventional supervised learning approach to imitation learning is to train a classifier or regressor to predict the expert's behavior given training data comprising input/output pairs of contexts and actions taken by the expert. However, the learned policy's prediction affects (the distribution of) future states during the policy's actual execution, and so violates the crucial i.i.d. assumption made by most statistical learning approaches. To address this issue, numerous learning reduction approaches have been proposed \cite{daume2009search,ross2010efficient,dagger}, which iteratively modify the training distribution in various ways such that any supervised learning guarantees provably lift to the sequential imitation setting (potentially at the cost of statistical or computational efficiency).

We present a learning reduction approach to smooth imitation learning for online sequence prediction, which we call SIMILE (\textbf{S}mooth \textbf{IMI}tation \textbf{LE}arning). Building upon learning reductions that employ policy aggregation \cite{daume2009search}, we provably lift supervised learning guarantees to the smooth imitation setting and show much faster convergence behavior compared to previous work.
Our contributions can be summarized as:
\vspace{-1.2em}
\begin{itemize}
	\itemsep-0.2em
\item We formalize the problem of smooth imitation learning for online sequence prediction, and introduce a family of smooth policy classes  that is amenable to supervised learning reductions.
\item We present a principled learning reduction approach, which we call SIMILE.
Our approach enjoys several attractive practical properties, including learning a fully deterministic stationary policy (as opposed to SEARN \cite{daume2009search}), and not requiring data aggregation (as opposed to DAgger \cite{dagger}) which can lead to super-linear training time.
\item We provide performance guarantees that lift the the underlying supervised learning guarantees to the smooth imitation setting.  Our guarantees hold in the agnostic setting, i.e., when the supervised learner  might not achieve perfect prediction.
\item We show how to exploit a stability property of our smooth policy class to enable adaptive learning rates that yield provably much faster convergence compared to SEARN \cite{daume2009search}.
\item We empirically evaluate using the setting of smooth camera planning \cite{smooth_camera}, and demonstrate the performance gains of our approach.
\end{itemize}

\vspace{-0.12in}
\section{Problem Formulation}
\label{sec:problem}
\vspace{-0.2em}
Let  $\X = \{x_1,\ldots,x_T\} \subset \mathcal{X}^T$ denote a context sequence from the environment $\mathcal{X}$, and $\A = \{a_1,\ldots,a_T\}\subset\mathcal{A}^T$ denote an action sequence from some action space $\mathcal{A}$.  Context sequence is exogenous, meaning $a_t$ does not influence future context $x_{t+k}$ for $k\geq 1$. Let $\Pi$ denote a policy class, where each $\pi\in\Pi$ generates an action sequence $\A$ in response to a context sequence $\X$.
Assume $\mathcal{X}\subset\mathbb{R}^m,\mathcal{A}\subset\mathbb{R}^k$ are continuous and infinite, with $\mathcal{A}$ non-negative and bounded such that $\boldsymbol{\vec{0}} \preceq a \preceq R\boldsymbol{\vec{1}} \enskip \forall a \in \mathcal{A}$. 

Predicting actions $a_t$ may depend on recent contexts $x_t,\ldots,x_{t-p}$ and actions $a_{t-1}, \ldots, a_{t-q}$. 
Without loss of generality, we define a state space $\mathcal{S}$ as $\{s_t = \left[ x_t,a_{t-1}\right]\}$.\footnote{We can always concatenate consecutive contexts and actions.} 
Policies $\pi$ can thus be viewed as mapping states $\mathcal{S} = \mathcal{X}\times\mathcal{A}$ to actions $\mathcal{A}$.
A roll-out of $\pi$ given context sequence $\X = \{x_1,\ldots,x_T\}$ is the action sequence $\A = \{a_1,\ldots,a_T\}$:
\begin{align*}
a_t &= \pi(s_t)=\pi(\left[x_t,a_{t-1}\right]), \\ 
s_{t+1} &= \left[ x_{t+1},a_t\right] \quad \forall t\in\left[1,\ldots,T\right].
\end{align*}
Note that unlike the general reinforcement learning problem, we consider the setting where the state space splits into external and internal components (by definition, $a_t$ influences subsequent states $s_{t+k}$, but not $x_{t+k}$). The use of exogenous contexts $\{x_t\}$ models settings where a policy needs to take online, sequential actions based on external environmental inputs, e.g. smooth self-driving vehicles for obstacle avoidance, helicopter aerobatics in the presence of turbulence, or smart grid management for external energy demand.  The technical motivation of this dichotomy is that we will enforce smoothness only on the internal state.




Consider the example of autonomous camera planning for broadcasting a sport event \cite{smooth_camera}. $\mathcal{X}$ can correspond to game information such as the locations of the players, the ball, etc., and  $\mathcal{A}$ can correspond to the pan-tilt-zoom configuration of the broadcast camera.  Manually specifying a good camera policy can be very challenging due to sheer complexity involved with mapping $\mathcal{X}$ to $\mathcal{A}$. It is much more natural to train $\pi\in\Pi$ to mimic observed expert demonstrations.  For instance, $\Pi$ can be the space of neural networks or tree-based ensembles (or both).



Following the basic setup from \cite{dagger}, for any policy $\pi\in\Pi$, let $d_t^\pi$ denote the distribution of states at time $t$ if $\pi$ is executed  for the first $t-1$ time steps. Furthermore, let $d_\pi = \frac{1}{T}\sum_{t=1}^T d_t^\pi$ be the average distribution of states if we follow $\pi$ for all $T$ steps.
The goal of imitation learning is to find a policy $\pih\in\Pi$ which minimizes the imitation loss under its own induced distribution of states: 
\begin{eqnarray} 
\pih = \argmin_{\pi\in\Pi}\ell_{\pi}(\pi) = \argmin_{\pi\in\Pi}\E_{s \sim d_{\pi}}\left[ \ell(\pi(s)) \right],\label{eqn:obj}
\end{eqnarray}
where the (convex) imitation loss $\ell(\pi(s))$ captures how well $\pi$ imitates expert demonstrations for state $s$.
One common $\ell$ is squared loss between the policy's decision and the expert demonstration: $\ell(\pi(s)) = \|\pi(s)-\pis(s)\|^2$ for some norm $\norm{.}$.
Note that computing $\ell$ typically requires having access to a training set of expert demonstrations $\pis$ on some set of context sequences. 
We also assume an agnostic setting, where the minimizer of \eqref{eqn:obj} does not necessarily achieve 0 loss (i.e. it cannot perfectly imitate the expert). 
\vspace{-0.05in}
\subsection{Smooth Imitation Learning \& Smooth Policy Class}
In addition to accuracy, 
   a key requirement of many continuous control and planning problems is smoothness (e.g., smooth camera trajectories). Generally, ``smoothness" may reflect domain knowledge about stability properties or approximate equilibria of a dynamical system. 
   We thus formalize the problem of \textit{smooth imitation learning} as minimizing \eqref{eqn:obj} over a smooth policy class $\Pi$. 
   
   Most previous work on learning smooth policies focused on simple policy classes such as linear models \cite{abbeel2004apprenticeship}, which can be overly restrictive. We instead define a much more general smooth policy class $\Pi$ as a regularized space of complex models.
\vspace{-0.5em}
\begin{defn}[Smooth policy class $\Pi$]
	\label{def:policy_class}
	\textit{ Given a complex model class $\mathcal{F}$ and a class of smooth regularizers $\mathcal{H}$, we define smooth policy class $\Pi\subset\mathcal{F}\times\mathcal{H}$ as satisfying:\\
		\begin{align*}
			\Pi \triangleq \{	\pi = (f,h), &f\in\mathcal{F}, h\in\mathcal{H}\enskip | \enskip \pi(s) \text{ is close to} \\ &\text{ both } f(x,a) \text{ and } h(a)  \\
			& \forall \text{ induced state } s=[x,a]\in\mathcal{S}  \}
		\end{align*}
		where closeness is controlled by regularization.}
\end{defn}
\vspace{-0.1in}
For instance, $\mathcal{F}$ can be the space of neural networks or decision trees and $\mathcal{H}$ be the space of  smooth analytic functions. 
$\Pi$ can thus be viewed as policies that predict close to some $f\in\mathcal{F}$ but are regularized to be close to some $h\in\mathcal{H}$.
For sufficiently expressive $\mathcal{F}$, we often have that $\Pi\subset \mathcal{F}$.  Thus optimizing over $\Pi$ can be viewed as constrained optimization over $\mathcal{F}$ (by $\mathcal{H}$), which can be challenging.
Our SIMILE approach integrates alternating optimization (between $\mathcal{F}$ and $\mathcal{H}$) into the learning reduction. 
We provide two concrete examples of $\Pi$ below.


\begin{exmp}[$\Pi_{\lambda}$]
	\label{example:smooth_policy_class}
   Let $\mathcal{F}$ be any complex supervised model class, and define the simplest possible $\mathcal{H} \triangleq \{ h(a) = a\}$. Given $f\in\mathcal{F}$, the prediction of a policy $\pi$  can be viewed as regularized optimization over the action space to ensure closeness of $\pi$ to both $f$ and $h$:
	\setlength{\abovedisplayskip}{3pt}
	\setlength{\belowdisplayskip}{3pt}
	\begin{align}
	\pi(x,a) &= \argmin_{a^\prime\in\mathcal{A}}\norm{f(x,a)-a^\prime}^2+\lambda\norm{h(a) -a^\prime}^2\nonumber\\
	&= \frac{f(x,a) + \lambda h(a)}{1+\lambda} = \frac{f(x,a) + \lambda a}{1+\lambda},\label{eqn:pi}
	\end{align}
	where regularization parameter $\lambda$ trades-off closeness to $f$ and to previous action. For large $\lambda$, $\pi(x,a)$ is encouraged make predictions that stays close to previous action $a$.
\end{exmp}

	

\begin{exmp}[Linear auto-regressor smooth regularizers]
	\label{example:autoregressor_policy_class}
Let $\mathcal{F}$ be any complex supervised model class, and define $\mathcal{H}$ using linear auto-regressors, $\mathcal{H}\triangleq\{h(a) = \theta^\top a\}$, which model actions as a linear dynamical system \cite{wold1939study}. We can define $\pi$ analogously to \eqref{eqn:pi}.
\end{exmp}


\vspace{-0.05in}
In general, SIMILE requires that $\Pi$ satisfies a smooth property stated below. 
This property, which is exploited in our theoretical analysis (see Section \ref{sec:theory}), is motivated by the observation that given a (near) constant stream of context sequence, a stable behavior policy should exhibit a corresponding action sequence with low curvature.  The two examples above satisfy this property for sufficiently large $\lambda$.
\begin{defn}[$H$-state-smooth imitation policy]
	\label{defn:smooth_policy}
For small constant $0< H \ll 1$, a policy $\pi(\left[x,a\right])$ is $H$-state-smooth if it is $H$-smooth w.r.t. $a$, i.e. for fixed $x\in\mathcal{X}$, $\forall a,a^\prime \in \mathcal{A}$, $\forall i$: $\norm{\nabla\pi^i([x,a]) - \nabla\pi^i([x,a^\prime])}_{*}\leq H\norm{a-a^\prime} \enskip$ where $\pi^i$ indicates the $i^{th}$ component of vector-valued function\footnote{This emphasizes the possibility that $\pi$ is a vector-valued function of $a$. The gradient and Hessian are viewed as arrays of $k$ gradient vectors and Hessian matrices of 1-d case, since we simply treat action in $\mathbb{R}^k$ as an array of $k$ standard functions.} $\pi(s) = \left[\pi^1(s),\ldots,\pi^k(s)\right]\in\mathbb{R}^k$, and $\norm{.}$ and $\norm{.}_{*}$ are some norm and dual norm respectively. For twice differentiable policy $\pi$, this is equivalent to having the bound on the Hessian $\nabla^2\pi^i([x,a])\preceq H\mathbb{I}_k \enskip \forall i$.
\end{defn}

\vspace{-0.1in}
\section{Related Work}
\label{sec:related}
The most popular traditional approaches for learning from expert demonstration focused on using approximate policy iteration techniques in the MDP setting \cite{kakade2002approximately, bagnell2003policy}. Most prior approaches operate in discrete and finite action space \cite{he2012imitation, ratliff2009learning, abbeel2004apprenticeship, argall2009survey}. Some focus on continuous state space \cite{abbeel2005exploration}, but requires a linear model for the system dynamics. 
In contrast, we focus on learning complex smooth functions within continuous action and state spaces. 

\vspace{-0.3em}
One natural approach to tackle the more general setting is to reduce imitation learning to a standard supervised learning problem \cite{syed2010reduction, langford2005relating, lagoudakis2003reinforcement}. However, standard supervised methods assume i.i.d. training and test examples, thus ignoring the distribution mismatch between training and rolled-out trajectories directly applied to sequential learning problems \cite{kakade2002approximately}. Thus a naive supervised learning approach normally leads to unsatisfactory results \cite{ross2010efficient}.

\vspace{-0.05in}
\textbf{Iterative Learning Reductions.}
State-of-the-art learning reductions for imitation learning typically take an iterative approach, where each training round uses standard supervised learning to learn a policy \cite{daume2009search,dagger}.  
In each round $n$, the following happens:
 \vspace{-0.35in}
 \begin{itemize}
 		\itemsep-0.1em
 	\item Given initial state $s_0$ drawn from the starting distribution of states, the learner executes current policy $\pi_n$, resulting in a sequence of states $s_1^n,\ldots,s_T^n$.
   \vspace{-0.04in}
 	\item For each $s_t^n$, a label $\widehat{a}_t^n$ (e.g., expert feedback) is collected indicating what the expert would do given $s_t^n$, resulting in a new dataset $\mathcal{D}_n = \{(s_t,\widehat{a}_t^n) \}$. 
   \vspace{-0.04in}
 	\item The learner integrates $\mathcal{D}_n$ to learn a policy $\pih_n$. The learner updates the current policy to $\pi_{n+1}$ based on $\pih_n$ and $\pi_n$. 
 \end{itemize}
   \vspace{-1em}
The main challenge is controlling for the cascading errors caused by the changing dynamics of the system, i.e., the distribution of states in each $\mathcal{D}_n\sim d_{\pi_n}$. 
A policy trained using $d_{\pi_n}$ induces a different distribution of states than $d_{\pi_n}$, and so is no longer being evaluated on the same distribution as during training.
A principled reduction should (approximately) preserve the i.i.d. relationship between training and test examples. Furthermore the state distribution $d_\pi$ should converge to a stationary distribution. 

\vspace{-0.2em}
The arguably most notable learning reduction approaches for imitation learning are SEARN \cite{daume2009search} and DAgger \cite{dagger}. 
At each round,  SEARN learns a new policy $\pih_n$ and returns a distribution (or mixture) over previously learned policies: $\pi_{n+1} = \beta\pih_n + (1-\beta)\pi_n$ for $\beta\in(0,1)$. 
For appropriately small choices of $\beta$, this stochastic mixing limits the ``distribution drift'' between $\pi_n$ and $\pi_{n+1}$ and can provably guarantee that the performance of $\pi_{n+1}$ does not degrage significantly relative to the expert demonstrations.\footnote{A similar approach was adopted in Conservative Policy Iteration for the MDP setting \cite{kakade2002approximately}.}

\vspace{-0.2em}
DAgger, on the other hand, achieves stability by aggregating a new dataset at each round to learn a new policy from the combined data set $\mathcal{D} \leftarrow\mathcal{D}\cup\mathcal{D}_n$. This aggregation, however, significantly increases the computational complexity and thus is not practical for large problems that require many iterations of learning (since the training time grows super-linearly w.r.t. the number of iterations).

\vspace{-0.2em}
Both SEARN and DAgger showed that only a polynomial number of training rounds is required for convergence to a good policy, but with a dependence on the length of horizon $T$. In particular, to non-trivially bound the total variation distance $\norm{d_{\pi_{new}}-d_{\pi_{old}}}_{1}$ of the state distributions between old and new policies, a learning rate $\beta<\frac{1}{T}$ is required to hold (Lemma 1 of \citet*{daume2009search} and Theorem 4.1 of \citet*{dagger}). As such, systems with very large time horizons might suffer from very slow convergence. 

\vspace{-0.5em}
\textbf{Our Contributions.} Within the context of previous work, our SIMILE approach can be viewed as extending SEARN to smooth policy classes with the following improvements:
\vspace{-2em}
\begin{itemize}
		\itemsep-0.3em
	\item We provide a policy improvement bound that does not depend on the time horizon $T$, and can thus converge much faster. In addition, SIMILE has adaptive learning rate, which can further improve convergence.
	\item For the smooth policy class described in Section \ref{sec:problem}, we show how to generate simulated or ``virtual'' expert feedback in order to guarantee stable learning. This alleviates the need to have continuous access to a dynamic oracle / expert that shows the learner what to do when it is off-track. In this regard, the way SIMILE integrates expert feedback subsumes the set-up from SEARN and DAgger. 
	\item Unlike SEARN, SIMILE returns  fully deterministic policies. Under the continuous setting, deterministic policies are strictly better than stochastic policies as (i) smoothness is critical and (ii) policy sampling requires holding more data during training, which may not be practical for infinite state and action spaces. 
   \item Our theoretical analysis reveals a new sequential prediction setting that yields provably fast convergence, in particular for smooth policy classes on finite-horizon problems.  Existing settings that enjoy such results are limited to Markovian dynamics with discounted future rewards or linear model classes.  
\end{itemize}

\vspace{-0.2in}
\section{Smooth Imitation Learning Algorithm}
\label{sec:method}
\vspace{-0.5em}
Our learning algorithm, called SIMILE (\textbf{S}mooth \textbf{IMI}tation \textbf{LE}arning), is described in Algorithm \ref{algo:simile}. 
At a high level, the process can be described as:
\vspace{-0.5em}
\begin{enumerate}[nolistsep]
\item Start with some initial policy $\pih_0$ (Line 2).
\item At iteration $n$, use $\pi_{n-1}$ to build a new state distribution $\s_n$ and dataset $\mathcal{D}_n = \{(s_t^n,\ah_t^n)\}$ (Lines 4-6).
\item Train $\pih_n = \argmin_{\pi\in\Pi}\mathbb{E}_{s\sim \s_n}\left[\ell_n(\pi(s))\right]$, where $\ell_n$ is the imitation loss (Lines 7-8).  Note that $\ell_n$ needs not be the original $\ell$, but simply needs to converge to it.
\item Interpolate $\pih_n$ and $\pi_{n-1}$ to generate a new deterministic policy $\pi_{n}$ (Lines 9-10). Repeat from Step 2 with $n\leftarrow n+1$ until some termination condition is met.
\end{enumerate}
\begin{algorithm}[tb]
\caption{ SIMILE (\textbf{S}mooth \textbf{IMI}tation \textbf{LE}arning)}
\label{algo:simile}
\begin{algorithmic}[1]
\REQUIRE features $\mathbf{X} = \{x_t\}$, human trajectory $\A^* = \{a_t^*\}$, base routine $\train$, smooth regularizers $h\in\mathcal{H}$ \\
\STATE Initialize $\A_0 \leftarrow \A^*, \s_0\leftarrow \{\left[x_t,a_{t-1}^*\right]\} $, \\ $\qquad\qquad h_0 = \argmin\limits_{h\in\mathcal{H}}\sum\limits_{t=1}^T\norm{a_t^*-h(a_{t-1}^*)}$ \\
\STATE Initial policy $\pi_0 = \pih_0\leftarrow \train(\mathbf{S}_0,\mathbf{A}_0|\enskip h_0)$ \\
\FOR{$n = 1,\ldots, N$}
\STATE $\mathbf{A}_n  = \{a_t^n\} \leftarrow \pi_{n-1}(\mathbf{S}_{n-1})$\label{algo:roll_out} \hfill{//\textit{sequential roll-out} } \\
\STATE $\s_n \leftarrow \{s_t^n = \left[x_t,a_{t-1}^n\right]\} $ \hfill{//$s_t^n = \left[x_{t:t-p},a_{t-1:t-q}\right]$ \label{algo:form_state}} 
\STATE $\widehat{\A}_n = \{\widehat{a}_t^n \} \enskip \forall s_t^n\in\mathbf{S}_n$ \label{algo:collect_feedback} \hfill{// \textit{collect smooth feedback}}  
\STATE $h_n = \argmin\limits_{h\in\mathcal{H}}\sum\limits_{t=1}^T\norm{\widehat{a}_t^n-h(\widehat{a}_{t-1}^n)}$ \label{algo:update_regularizer} \hfill{//\textit{new regularizer}}  \\
\STATE $\pih_n \leftarrow \train(\mathbf{S}_n, \widehat{\mathbf{A}}_n |\enskip h_n)$ \label{algo:learn} \hfill{// \textit{train policy}} \\
\STATE $\beta \leftarrow \beta(\ell(\pih_{n}), \ell(\pi_{n-1}))$ \hfill{//\textit{adaptively set $\beta$ }} \\
\STATE $\pi_n = \beta\pih_n + (1-\beta)\pi_{n-1}$ \label{algo:interpolate} \hfill{// \textit{update policy }} \\
\ENDFOR \\
\OUTPUT Last policy $\pi_N$
\end{algorithmic}
\end{algorithm}
\setlength{\textfloatsep}{6pt}
\vspace{-0.25in}
\textbf{Supervised Learning Reduction.} The actual reduction is in Lines 7-8, where we follow a two-step procedure of first updating the smooth regularize $h_n$, and then training $\pih_n$ via supervised learning.
In other words, $\train$ finds the best $f\in\mathcal{F}$ possible for a fixed $h_n$.  We discuss how to set the training targets $\ah_t^n$ below.
\vspace{-0.3em}
\textbf{Policy Update.} The new policy $\pi_n$ is a deterministic interpolation between the previous $\pi_{n-1}$ and the newly learned $\pih_n$ (Line 10). 
In contrast, for SEARN, $\pi_n$ is a stochastic interploation \cite{daume2009search}. 
Lemma \ref{lem:sto_det} and Corollary \ref{cor:det_better_sto} show that deterministic interpolation converges at least as fast as stochastic for smooth policy classes.

This interpolation step plays two key roles. First, it is a form of myopic or greedy online learning. Intuitively, rolling out $\pi_n$ leads to incidental exploration on the mistakes of $\pi_n$, and so each round of training is focused on refining $\pi_n$. Second, the interpolation in Line 10 ensures a slow drift in the distribution of states from round to round, which preserves an approximate i.i.d. property for the supervised regression subroutine and guarantees convergence.  

However this model interpolation creates an inherent tension between maintaining approximate i.i.d. for valid supervised learning and more aggressive exploration (and thus faster convergence).  For example, SEARN's guarantees only apply for small $\beta < 1/T$. SIMILE circumvents much of this tension via a policy improvement bound that allows $\beta$ to adaptively increase depending on the quality of $\pih_n$ (see Theorem \ref{policy_improvement}), which thus guarantees a valid learning reduction while substantially speeding up convergence.

\vspace{-0.3em}
\textbf{Feedback Generation.} 
We can generate training targets $\ah_t^n$ using  ``virtual'' feedback from simulating expert demonstrations, which has two benefits. First, we need not query the expert $\pis$ at every iteration (as done in DAgger \cite{dagger}).
Continuously acquiring expert demonstrations at every round can be seen as a special case and a more expensive strategy. Second, virtual feedback ensures stable learning, i.e., every $\pih_n$ is a feasible smooth policy.  

\begin{wrapfigure}{r}{0.16\textwidth}
	\centering
	\includegraphics[scale=0.8]{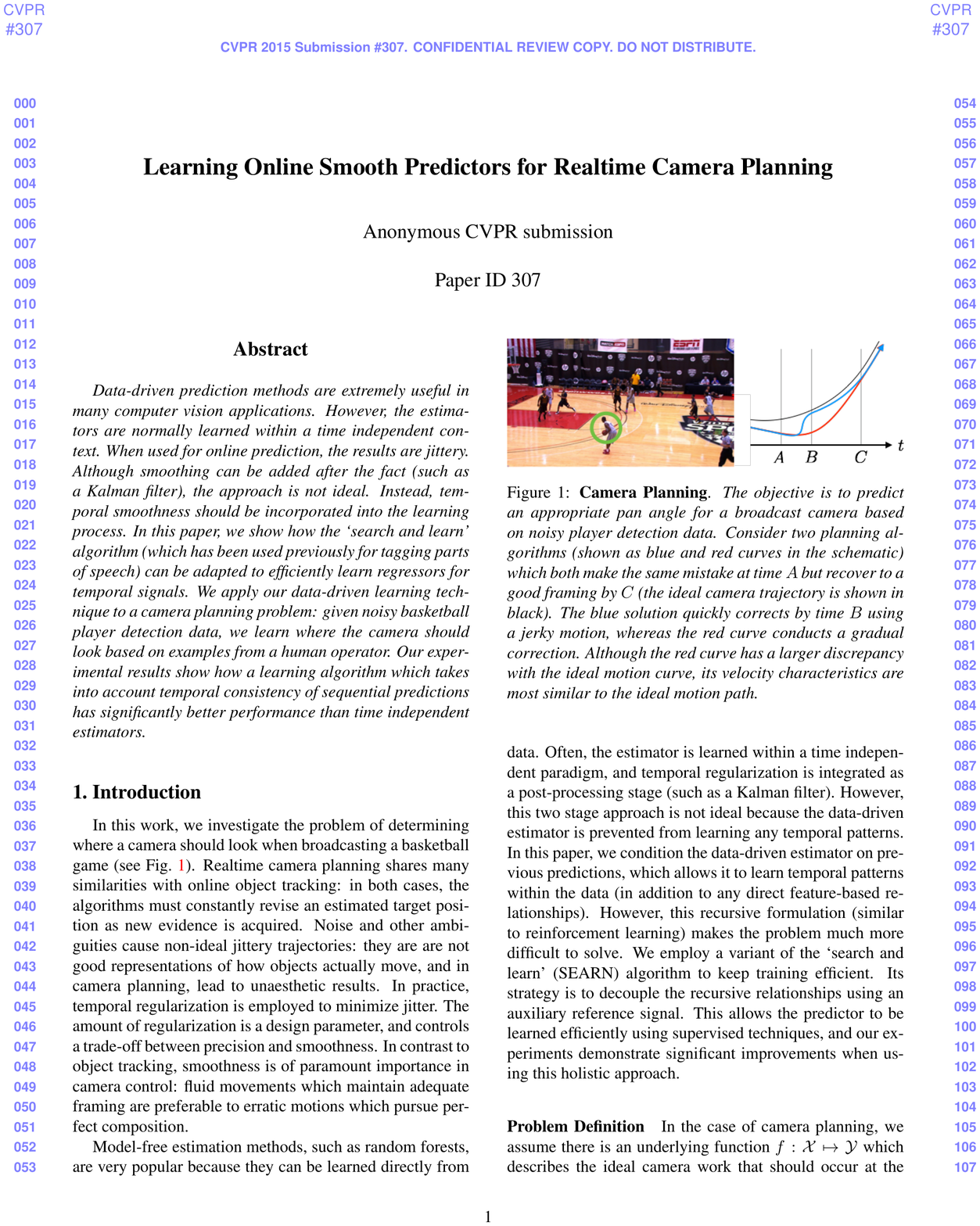}
	\vspace{-0.1in}
\caption{}
	\label{fig:example}
	\vspace{-0.1in}
\end{wrapfigure}
Consider 
Figure \ref{fig:example}, where our policy $\pi_n$ (blue/red) made a mistake at location A, and where we have only a single expert demonstration from $\pis$ (black). Depending on the smoothness requirements of the policy class, we can simulate virtual expert feedback as via either the red line (more smooth) or blue (less smooth) as a tradeoff between squared imitation loss and smoothness.

When the roll-out of $\pi_{n-1}$ (i.e. $\A_n$) differs substantially from $\A^*$, especially during early iterations, using smoother feedback (red instead of blue) can result in more stable learning. We formalize this notion for $\Pi_{\lambda}$ in Proposition \ref{prop:smooth_expert}. Intuitively, whenever $\pi_{n-1}$ makes a mistake, resulting in a ``bad'' state $s_t^n$, the feedback should recommend a smooth correction $\widehat{a}_t^n$ w.r.t. $\A_n$ to make training ``easier'' for the learner.\footnote{A similar idea was proposed \cite{he2012imitation} for DAgger-type algorithm, albeit only for linear model classes.}  The virtual feedback $\widehat{a}_t^n$ should converge to the expert's action $a_t^*$. In practice, we use $\widehat{a}_t^n = \sigma a_{t}^n + (1-\sigma) a_{t}^*$ with $\sigma\rightarrow 0$ as $n$ increases (which satisfies Proposition \ref{prop:smooth_expert}).

\vspace{-0.1in}
\section{Theoretical Results}
\label{sec:theory}
\vspace{-0.5em}
All proofs are deferred to the supplementary material.
\vspace{-0.1in}
\subsection{Stability Conditions}
One natural smoothness condition is that  $\pi(\left[x,a\right])$ should be stable w.r.t. $a$ if $x$ is fixed. 
Consider the camera planning setting: the expert policy $\pis$ should have very small curvature, since constant inputs should correspond to constant actions. 
This motivates Definition $\ref{defn:smooth_policy}$, which requires that $\Pi$ has low curvature given fixed context. We also show that smooth policies per Definition \ref{defn:smooth_policy} lead to stable actions, in the sense that ``nearby'' states are mapped to ``nearby'' actions. The following helper lemma is useful:
\vspace{-0.2em}
\begin{lem}
	\label{lem:H_bound}
For a fixed $x$, define $\pi(\left[x,a\right]) \triangleq\varphi(a)$. If $\varphi$ is non-negative and $H$-smooth w.r.t. $a$., then: 
\begin{equation}
\forall a,a':\ \left(\varphi(a)-\varphi(a^\prime)\right)^2\leq 6H\left(\varphi(a)+\varphi(a^\prime)\right)\norm{a-a^\prime}^2.\nonumber
\end{equation}
\end{lem}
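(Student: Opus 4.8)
The plan is to combine a \emph{self-bounding} gradient inequality with a difference-of-squares factorization. The single fact that does all the work is that a non-negative $H$-smooth function has a gradient controlled by its value: for every $a$,
\begin{equation}
\norm{\nabla\varphi(a)}_*^2 \le 2H\varphi(a). \nonumber
\end{equation}
Granting this, the target inequality follows almost immediately, and in fact with a sharper constant ($1$ instead of $6$), so the stated bound holds with room to spare.

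First I would establish the self-bounding inequality. Since $\varphi$ is $H$-smooth w.r.t.\ $a$, the descent lemma gives $\varphi(a') \le \varphi(a) + \inner{\nabla\varphi(a), a'-a} + \tfrac{H}{2}\norm{a-a'}^2$ for all $a'$. Minimizing the right-hand side over $a'$ yields the value $\varphi(a) - \tfrac{1}{2H}\norm{\nabla\varphi(a)}_*^2$ (the minimum of a linear term plus $\tfrac{H}{2}\norm{\cdot}^2$ is expressed through the dual norm); since $\varphi\ge 0$ the left side is non-negative, and rearranging gives the claimed bound.

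Next I would reduce to one dimension along the segment joining $a'$ and $a$. Writing $a_t = a'+t(a-a')$ and $g(t)=\varphi(a_t)$ for $t\in[0,1]$, the chain rule and Cauchy--Schwarz give $\abs{g'(t)} \le \norm{\nabla\varphi(a_t)}_*\,\norm{a-a'} \le \sqrt{2Hg(t)}\,\norm{a-a'}$, using the self-bounding bound. Consequently $\sqrt{g}$ is Lipschitz along the segment: differentiating $\sqrt{g(t)+\epsilon}$ and letting $\epsilon\downarrow 0$ gives $\abs{\sqrt{g(1)} - \sqrt{g(0)}} \le \sqrt{H/2}\,\norm{a-a'}$ (the $\epsilon$-regularization sidesteps non-differentiability at zeros of $g$). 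Finally I factor $(\varphi(a)-\varphi(a'))^2 = (g(1)-g(0))^2 = (\sqrt{g(1)}-\sqrt{g(0)})^2(\sqrt{g(1)}+\sqrt{g(0)})^2$ and bound the two factors by $\tfrac{H}{2}\norm{a-a'}^2$ and $2(\varphi(a)+\varphi(a'))$ respectively, which already yields $H\norm{a-a'}^2(\varphi(a)+\varphi(a'))$ and hence the stated $6H$ bound.

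The one place needing care --- and the main obstacle --- is the self-bounding step, since minimizing the descent-lemma bound formally evaluates $\varphi$ at a point $a - \nabla\varphi(a)/H$ that may leave $\mathcal{A}$. I would resolve this by treating $\varphi$ as non-negative and $H$-smooth on all of $\mathbb{R}^k$ (consistent with the policy being well defined and $\mathcal{A}$-valued everywhere, as in $\Pi_{\lambda}$), so that the gradient step stays in the region where both hypotheses hold. If one insists on working only inside a bounded box, the same argument goes through after replacing the exact minimizer by the feasible point along the descent direction, at the cost of a larger constant --- which is precisely the slack that the factor $6$ affords.
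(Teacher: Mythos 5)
Your proposal is correct, and it diverges from the paper's proof at the key step. Both arguments begin with the same self-bounding gradient property for non-negative $H$-smooth functions and both reduce the multi-dimensional case to the segment joining $a'$ and $a$; the paper derives $\abs{\nabla\phi(a)}\le\sqrt{4H\phi(a)}$ from a slightly loose descent inequality, whereas your exact minimization of the descent lemma gives the sharper $\norm{\nabla\varphi(a)}_*^2\le 2H\varphi(a)$. The real difference is in how the self-bounding property is converted into the claimed inequality: the paper follows the Srebro--Sridharan--Tewari one-dimensional argument, a two-case analysis on whether $\abs{a-a'}$ exceeds $\abs{\nabla\phi(a)}/(5H)$, which is what produces the constant $6$. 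You instead observe that the self-bounding property makes $\sqrt{\varphi}$ Lipschitz along the segment (with the $\epsilon$-regularization correctly handling zeros of $g$) and then factor the difference of squares, bounding $\bigl(\sqrt{g(1)}+\sqrt{g(0)}\bigr)^2\le 2\bigl(g(1)+g(0)\bigr)$; this is cleaner, avoids the case split, and yields the constant $1$ in place of $6$, so the lemma as stated follows a fortiori. Your flagged caveat --- that the gradient step $a-\nabla\varphi(a)/H$ used in the self-bounding argument may leave $\mathcal{A}$ --- applies equally to the paper's proof (which evaluates $\phi$ at $a-\nabla\phi(a)/2H$ without comment), so you are not introducing a new gap but rather making explicit, and offering a repair for, an assumption the paper leaves implicit.
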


Writing $\pi$ as $\pi(\left[x,a\right])\triangleq\left[\pi^1(\left[x,a\right]),\ldots,\pi^k(\left[x,a\right])\right]$ with each $\pi^i(\left[x,a\right])$ $H$-smooth, Lemma \ref{lem:H_bound} implies $\norm{(\pi(\left[x,a\right]) - \pi(\left[x,a^\prime\right]))} \leq \sqrt{12HR}\norm{a-a^\prime}$ for $R$ upper bounding $\mathcal{A}$. 
Bounded action space means that a sufficiently small $H$ leads to the following stability conditions:
\vspace{-1.3em}
\begin{cond}[Stability Condition 1]
	\label{cond:stability_condition}
$\Pi$ satisfies the Stability Condition 1 if for a fixed input feature $x$, the actions of $\pi$ in states $s=\left[x,a\right]$ and $s^\prime = \left[x,a^\prime\right]$ satisfy $\norm{\pi(s)-\pi(s^\prime)} \leq  \norm{a-a^\prime}$  for all $a,a^\prime\in\mathcal{A}$.
\end{cond}
\vspace{-0.3em}
\begin{cond}[Stability Condition 2]
	\label{cond:self-smooth}
$\Pi$ satisfies Stability Condition 2 if each $\pi$ is $\gamma$-Lipschitz continuous in the action component $a$ with $\gamma<1$. That is, for a fixed $x$ the actions of $\pi$ in states $s=\left[x,a\right]$ and $s^\prime = \left[x,a^\prime\right]$ satisfy $\norm{\pi(s)-\pi(s^\prime)} \leq  \gamma\norm{a-a^\prime}$ for all $a,a^\prime\in\mathcal{A}$.
\end{cond}
\vspace{-0.5em}
These two conditions directly follow from Lemma \ref{lem:H_bound} and assuming sufficiently small $H$. Condition \ref{cond:self-smooth} is mildly stronger than Condition \ref{cond:stability_condition}, and enables proving much stronger policy improvement compared to previous work.

\vspace{-1.0em}
\subsection{Deterministic versus Stochastic}
\vspace{-0.5em}
Given two policies $\pi$ and $\pih$, and interpolation parameter $\beta\in(0,1)$, consider two ways to combine policies:
\vspace{-0.5em}
\begin{enumerate}[nolistsep]
   \item \textbf{stochastic}: $\pi_{sto}(s) = \pih(s)$ with probability $\beta$, and $\pi_{sto}(s) = \pi(s)$ with probability $1-\beta$
   \item \textbf{deterministic}: $\pi_{det}(s) = \beta\pih(s)+(1-\beta)\pi(s)$
\end{enumerate}
Previous learning reduction approaches only use stochastic interpolation \cite{daume2009search,dagger}, whereas SIMILE uses deterministic.  The following result shows that deterministic and stochastic interpolation yield the same expected behavior for smooth policy classes.
\begin{lem}
	\label{lem:sto_det}
Given any starting state $s_0$, sequentially execute $\pi_{det}$ and $\pi_{sto}$ to obtain two separate trajectories $\A = \{a_t\}_{t=1}^T$ and $\tilde{\A} = \{\tilde{a_t}\}_{t=1}^T$ such that $a_t = \pi_{det}(s_t)$ and $\tilde{a}_t = \pi_{sto}(\tilde{s}_t)$, where $s_t = [x_t,a_{t-1}]$ and $\tilde{s}_t = [x_t,\tilde{a}_{t-1}]$. Assuming the policies are stable as per Condition \ref{cond:stability_condition}, we have $\mathbb{E}_{\tilde{\A}}[\tilde{a}_t] = a_t \enskip\forall t = 1,\ldots,T$, where the expectation is taken over all random roll-outs of $\pi_{sto}$.
\end{lem}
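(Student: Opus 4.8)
The plan is to argue by induction on $t$, propagating the single invariant $\E_{\tilde{\A}}[\tilde{a}_{t-1}] = a_{t-1}$ forward one step at a time. Since the context sequence $\{x_t\}$ is exogenous and therefore identical along both roll-outs, the only randomness in $\tilde{s}_t = [x_t,\tilde{a}_{t-1}]$ lives in its action component, so establishing the invariant for the actions immediately yields $\E_{\tilde{\A}}[\tilde{s}_t] = s_t = [x_t,a_{t-1}]$ as well. This reduces the whole lemma to a clean one-step statement about how expectation interacts with the mixture.

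For the base case $t=1$, both roll-outs start from the common state $s_0$, hence $\tilde{s}_1 = s_1$ deterministically. Conditioning on this shared (non-random) state and using only the definition of the stochastic mixture, $\E[\tilde{a}_1] = \beta\pih(s_1) + (1-\beta)\pi(s_1) = \pi_{det}(s_1) = a_1$; there is no gap here precisely because $\tilde{s}_1$ is not yet random. For the inductive step I would apply the tower property, conditioning on the trajectory through time $t-1$. Given $\tilde{s}_t$, the coin flip at step $t$ is independent of the past, so $\E[\tilde{a}_t \mid \tilde{s}_t] = \beta\pih(\tilde{s}_t) + (1-\beta)\pi(\tilde{s}_t) = \pi_{det}(\tilde{s}_t)$, and taking the outer expectation gives $\E[\tilde{a}_t] = \E[\pi_{det}(\tilde{s}_t)]$. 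The goal is then to identify this with $\pi_{det}(\E[\tilde{s}_t]) = \pi_{det}(s_t) = a_t$ via the inductive hypothesis.

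The hard part is exactly this last commutation, $\E[\pi_{det}(\tilde{s}_t)] = \pi_{det}(\E[\tilde{s}_t])$: it is an equality only when $\pi_{det}$ is affine in its action argument, and for a genuinely curved policy there is a residual Jensen gap over the random $\tilde{a}_{t-1}$ that does not vanish. This is where the smooth policy class and Condition \ref{cond:stability_condition} must be invoked. For the classes of interest—$\Pi_\lambda$ and the linear auto-regressor regularizers, where the dependence on the previous action enters through the \emph{linear} regularizer—$\pi_{det}$ is affine in $a$, the commutation is exact, and the induction closes to give $\E_{\tilde{\A}}[\tilde{a}_t] = a_t$ for all $t$.

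Finally, the Lipschitz bound of Condition \ref{cond:stability_condition} enters to keep the argument valid over the entire horizon: the interpolation $\pi_{det} = \beta\pih + (1-\beta)\pi$ of two maps that are each Lipschitz-$1$ in $a$ is again Lipschitz-$1$ (by the triangle inequality and convexity of the bound), so the stochastic trajectory cannot drift away from the deterministic one in an uncontrolled way, and the inductively maintained state identity $\E[\tilde{s}_t]=s_t$ stays meaningful. I would therefore expect the main technical obstacle to be cleanly isolating and justifying the commutation step, rather than the bookkeeping of the induction itself; for policies only approximately affine in $a$ the same scheme would deliver $\E[\tilde{a}_t]\approx a_t$ with an error governed by this Lipschitz constant and the variance of $\tilde{a}_{t-1}$.
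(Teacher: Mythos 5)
Your proposal follows the same inductive scheme as the paper's proof: induct on $t$, use the tower property to write $\E[\tilde{a}_t] = \beta\E[\pih(\tilde{s}_t)] + (1-\beta)\E[\pi(\tilde{s}_t)]$, and close the induction by comparing this to $\pi_{det}(s_t)$. Where you differ is in your treatment of the crux, and here your version is actually the more careful one. The paper's inductive step bounds $\norm{\E_{\tilde{\A}}[\pih(\tilde{s}_t)] - \pih(s_t)}$ by $\norm{\E_{\tilde{\A}}[\tilde{a}_{t-1}] - a_{t-1}}$ and invokes the inductive hypothesis to conclude this is zero. But Condition \ref{cond:stability_condition} is a pointwise Lipschitz bound; combined with Jensen's inequality it yields only $\norm{\E[\pih(\tilde{s}_t)] - \pih(s_t)} \leq \E\norm{\tilde{a}_{t-1} - a_{t-1}}$, and the inductive hypothesis $\E[\tilde{a}_{t-1}] = a_{t-1}$ does not control this quantity: the random variable $\tilde{a}_{t-1}$ can have its mean at $a_{t-1}$ while $\E\norm{\tilde{a}_{t-1}-a_{t-1}}$ stays bounded away from zero. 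In other words, the paper's displayed inequality silently commutes the expectation past the nonlinear map $\pih$, which is exactly the Jensen gap you isolate. Your resolution --- restricting to policies that are affine in the action argument, which covers $\Pi_\lambda$ and the linear auto-regressor regularizers where the previous action enters only through the linear $h$ --- is the honest way to make the induction close exactly, and your closing remark that general stable policies only give $\E[\tilde{a}_t]\approx a_t$ up to a variance-dependent error is the correct statement of what Condition \ref{cond:stability_condition} alone buys. One minor caveat: your last paragraph's claim that the Lipschitz bound keeps $\E[\tilde{s}_t]=s_t$ ``meaningful'' is doing no real work once you have affineness, since the commutation is then exact regardless; the Lipschitz property matters only in the approximate, non-affine regime.
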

\vspace{-0.5em}
Lemma \ref{lem:sto_det} shows that deterministic policy combination (SIMILE) yields unbiased trajectory roll-outs of stochastic policy combination (as done in SEARN \& CPI). This represents a major advantage of SIMILE, since the number of stochastic roll-outs of $\pi_{sto}$ to average to the deterministic trajectory of $\pi_{det}$ is polynomial in the time horizon  $T$, leading to much higher computational complexity. 
Furthermore, for convex imitation loss $\ell_\pi(\pi)$, Lemma \ref{lem:sto_det} and Jensen's inequality yield the following corollary, which states that under convex loss, deterministic policy performs at least no worse than stochastic policy in expectation:
\vspace{-0.3em}
\begin{cor}[Deterministic Policies Perform Better]
	\label{cor:det_better_sto}
For deterministic $\pi_{det}$ and stochastic $\pi_{sto}$ interpolations of two policies $\pi$ and $\pih$, and convex loss $\ell$, we have:
\begin{align*}
\ell_{\pi_{det}}(\pi_{det}) &= \ell_{\pi_{sto}}(\mathbb{E}[\pi_{sto}]) \\
&\leq \mathbb{E}\left[\ell_{\pi_{sto}}(\pi_{sto})\right]
\end{align*}
where the expectation is over all roll-outs of $\pi_{sto}$.
\end{cor}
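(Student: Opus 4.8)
The plan is to reduce both claims to a term-by-term comparison of the per-timestep losses along the two trajectories, invoking Lemma \ref{lem:sto_det} for the equality and convexity of $\ell$ for the inequality. First I would unfold the loss functional $\ell_\pi(\pi) = \E_{s\sim d_\pi}[\ell(\pi(s))]$ using $d_\pi = \frac{1}{T}\sum_t d_t^\pi$. Since $\pi_{det}$ is deterministic, each $d_t^{\pi_{det}}$ is a point mass at the realized state $s_t$, so $\ell_{\pi_{det}}(\pi_{det}) = \frac{1}{T}\sum_{t=1}^T \ell(a_t)$, where $\{a_t\}$ is the roll-out of $\pi_{det}$ from $s_0$ (matching the notation of Lemma \ref{lem:sto_det}). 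For the stochastic policy, a fixed roll-out produces a random trajectory $\{\tilde{a}_t\}$, so that roll-out contributes $\frac{1}{T}\sum_{t=1}^T \ell(\tilde{a}_t)$, and taking expectation over roll-outs gives $\E[\ell_{\pi_{sto}}(\pi_{sto})] = \frac{1}{T}\sum_{t=1}^T \E[\ell(\tilde{a}_t)]$. The middle quantity $\ell_{\pi_{sto}}(\E[\pi_{sto}])$ I read as the loss evaluated along the expected trajectory $\{\E[\tilde{a}_t]\}$, i.e. $\frac{1}{T}\sum_{t=1}^T \ell(\E[\tilde{a}_t])$.

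For the equality (first line), I would invoke Lemma \ref{lem:sto_det}, which under Stability Condition \ref{cond:stability_condition} gives $\E[\tilde{a}_t] = a_t$ for every $t$. Substituting this into the expression for $\ell_{\pi_{sto}}(\E[\pi_{sto}])$ immediately yields $\frac{1}{T}\sum_t \ell(\E[\tilde{a}_t]) = \frac{1}{T}\sum_t \ell(a_t) = \ell_{\pi_{det}}(\pi_{det})$. For the inequality (second line), I would apply Jensen's inequality to each summand: since $\ell$ is convex and $\tilde{a}_t$ is a random vector, $\ell(\E[\tilde{a}_t]) \le \E[\ell(\tilde{a}_t)]$. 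Summing over $t$ and normalizing by $T$ gives $\ell_{\pi_{sto}}(\E[\pi_{sto}]) \le \E[\ell_{\pi_{sto}}(\pi_{sto})]$, completing the argument.

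The only real subtlety, and the part I would be most careful about, is the bookkeeping of which distribution each expectation is taken over. The first-line equality is not a tautology: it rests entirely on Lemma \ref{lem:sto_det}'s conclusion that the stochastic roll-out is unbiased for the deterministic one, which in turn requires the stability of the policy class; so I would flag explicitly that this step is where the smoothness assumption enters. By contrast, the Jensen step needs only convexity of $\ell$ and holds for the marginal distribution of each $\tilde{a}_t$, so no independence across timesteps or further structure on the roll-out randomness is required. Making these two dependencies clear is the whole content of the proof.
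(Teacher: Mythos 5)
Your proposal is correct and follows exactly the route the paper intends: the equality comes from Lemma \ref{lem:sto_det} (unbiasedness of the stochastic roll-out, which is where Condition \ref{cond:stability_condition} enters), and the inequality is a per-timestep application of Jensen's inequality using convexity of $\ell$. The paper gives no further detail beyond citing these two ingredients, so your more explicit bookkeeping of the trajectory-wise decomposition is a faithful (and slightly more careful) version of the same argument.
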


\begin{rem}
We construct a simple example to show that Condition \ref{cond:stability_condition} may be necessary for iterative learning reductions to converge.
Consider the case where contexts $\X\subset\mathbb{R}$ are either constant or vary neglibly. Expert demonstrations should be constant $\pis([x_n,a^*]) = a^*$ for all $n$. Consider an unstable policy $\pi$ such that $\pi(s) = \pi([x,a]) = ka$ for fixed $k>1$.  The rolled-out trajectory of $\pi$ \textit{diverges} $\pis$ at an exponential rate. Assume optimistically that $\pih$ learns the correct expert behavior, which is  simply $\pih(s) = \pih([x,a]) = a$.  For any $\beta\in(0,1)$, the updated policy $\pi^\prime = \beta\pih+(1-\beta)\pi$ becomes $\pi^\prime([x,a]) = \beta a+(1-\beta)ka$.
Thus the sequential roll-out of the new policy $\pi^\prime$ will also yield an exponential gap from the correct policy.  By induction, the same will be true in all future iterations. 
\end{rem}
\vspace{-1.0em}
\subsection{Policy Improvement}
\vspace{-0.5em}
Our policy improvement guarantee builds upon the analysis from SEARN \cite{daume2009search}, which we extend to using adaptive learning rates $\beta$.
We first restate the main policy improvement result from \citet{daume2009search}.

\begin{lem}[SEARN's policy nondegradation - Lemma 1 from \citet{daume2009search}]
\label{lem:searn_analysis}
Let $\ell_{max}=\sup_{\pi,s}\ell(\pi(s))$, $\pi^\prime$ is defined as $\pi_{sto}$ in lemma \ref{lem:sto_det}. Then for $\beta\in (0,1/T)$:
\begin{equation}
\ell_{\pi^\prime}(\pi^\prime) - \ell_\pi(\pi) \leq \beta T \mathbb{E}_{s\sim d_\pi}\left[\ell(\pih(s))\right]+\frac{1}{2}\beta^2 T^2 \ell_{max}.\nonumber
\end{equation}
\end{lem}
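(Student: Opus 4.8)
The plan is to reproduce the coupling argument behind SEARN's original lemma, recast in the averaged-loss notation used here. Since $\pip = \pi_{sto}$ independently plays $\pih$ with probability $\beta$ and $\pi$ with probability $1-\beta$ at each step, I would first rewrite the rolled-out loss as $\ell_{\pip}(\pip) = \frac{1}{T}\sum_{t=1}^T \E_{s\sim d_t^{\pip}}\left[\beta\ell(\pih(s)) + (1-\beta)\ell(\pi(s))\right]$, using that, conditioned on the state $s$, the per-step expected loss of the mixture is the $\beta$-convex combination of the two component losses. The whole analysis then hinges on comparing the induced state distribution $d_t^{\pip}$ to the reference distribution $d_t^{\pi}$.

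To do this I would couple the two roll-outs on their per-step coin flips so that the $\pip$-trajectory and a reference $\pi$-trajectory coincide up to the first step at which $\pih$ is selected. Let $N$ denote the number of the $T$ steps at which $\pip$ invokes $\pih$; then $N\sim\mathrm{Binomial}(T,\beta)$, and the expectation splits across the events $\{N=0\}$ and $\{N\ge 1\}$. On $\{N=0\}$, which has probability $(1-\beta)^T$, the coupled trajectories are identical, so this event contributes exactly $\ell_\pi(\pi)$. The probability that $\pih$ is invoked at all is $1-(1-\beta)^T\le \beta T$; charging each such invocation the loss of $\pih$ on a state distributed (to leading order) as $d_\pi$ is what produces the first-order term $\beta T\,\E_{s\sim d_\pi}[\ell(\pih(s))]$.

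Everything not captured by this leading-order accounting is absorbed into the second-order remainder: the probability of invoking $\pih$ two or more times is $\binom{T}{2}\beta^2 \le \tfrac12\beta^2 T^2$, and on the corresponding (and any residually drifted) states every loss is bounded by $\ell_{max}$, giving the $\tfrac12\beta^2 T^2\ell_{max}$ term; the hypothesis $\beta<1/T$ guarantees $\beta^2T^2<\beta T<1$, so these remainder probabilities are genuinely subordinate. The main obstacle is precisely this bookkeeping step: showing that the distribution drift $d_t^{\pip}\!-\!d_t^{\pi}$ caused by switching to $\pih$ only contributes at second order once its first-order part has been charged against $\beta T\,\E_{d_\pi}[\ell(\pih)]$, so that the binomial tail bound $P(N\ge 2)\le \tfrac12\beta^2T^2$ suffices to control the residual. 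Summing the three contributions and subtracting $\ell_\pi(\pi)$ then yields the stated inequality.
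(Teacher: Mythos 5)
The paper does not actually prove this lemma: it is restated verbatim as background from Daum\'e et al.\ (2009) (their Lemma~1), and the supplementary material only contains proofs of the paper's own results (Lemmas~\ref{lem:H_bound} and~\ref{lem:sto_det}, Theorems~\ref{first_theorem} and~\ref{policy_improvement}, Proposition~\ref{prop:smooth_expert}). So there is no in-paper proof to compare against, and your proposal has to be judged as a reconstruction of SEARN's original argument.

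Your binomial decomposition over the number $N$ of invocations of $\pih$ is indeed the skeleton of that argument, and the $N=0$ and $N\ge 2$ contributions are handled correctly ($P(N=0)=(1-\beta)^T$ recovers $\ell_\pi(\pi)$ on the coupled trajectory, and $P(N\ge 2)\le\binom{T}{2}\beta^2\le\tfrac12\beta^2T^2$ paired with the bound $\ell_{max}$ gives the remainder). The genuine gap is the one you yourself flag as ``the main obstacle'': the claim that the post-invocation distribution drift contributes only at second order. Under the per-step loss reading used in this paper (where $\ell(\pi(s))$ is the instantaneous imitation loss at $s$), that claim fails. Conditioned on exactly one invocation of $\pih$ --- an event of probability roughly $\beta T$, i.e.\ first order in $\beta$ --- every step \emph{after} the invocation is evaluated on a state distribution that has drifted away from $d_t^\pi$, and the only bound available there is $\ell_{max}$; charging this to the remainder produces a term of order $\beta T\ell_{max}$, not $\tfrac12\beta^2T^2\ell_{max}$. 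SEARN's actual proof avoids this because its $\ell(\pih(s))$ is a cost-to-go (cost-sensitive) loss: the entire future consequence of the single deviation at $s$ is, by definition, already contained in $\ell(\pih(s))$, so the whole $N=1$ event is absorbed into the first-order term $\beta T\,\E_{s\sim d_\pi}[\ell(\pih(s))]$ and nothing is left over to drift. Without either adopting that cost-to-go definition of the loss or supplying a separate argument that the post-deviation per-step losses can be charged to the first-order term, your remainder accounting does not close.
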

\vspace{-0.1in}
SEARN guarantees that the new policy $\pi^\prime$ does not degrade from the expert $\pis$ by much only if $\beta<1/T$. Analyses of SEARN and other previous iterative reduction methods \cite{dagger, kakade2002approximately, bagnell2003policy, syed2010reduction} rely on bounding the variation distance between $d_\pi$ and $d_{\pi^\prime}$. Three drawbacks of this approach are: (i) non-trivial variation distance bound typically requires $\beta$ to be inversely proportional to time horizon $T$, causing slow convergence; (ii) not easily applicable to the continuous regime; and (iii) except under MDP framework with discounted infinite horizon, previous variation distance bounds do not guarantee monotonic policy improvements (i.e. $\ell_{\pip}(\pip) < \ell_{\pi}(\pi)$).

We provide two levels of guarantees taking advantage of Stability Conditions \ref{cond:stability_condition} and \ref{cond:self-smooth} to circumvent these drawbacks. Assuming the Condition \ref{cond:stability_condition} and convexity of $\ell$, our first result yields a guarantee comparable with SEARN. 
\begin{thm}[T-dependent Improvement]
	\label{first_theorem}
	Assume $\ell$ is convex and $L$-Lipschitz, and Condition \ref{cond:stability_condition} holds. Let $\epsilon = \max\limits_{s\sim d_\pi}\norm{\pih(s) - \pi(s)}$. Then: 
	\begin{equation}
	\ell_{\pi^\prime}(\pi^\prime) - \ell_{\pi}(\pi) \leq \beta\epsilon LT + \beta\left(\ell_{\pi}(\pih) - \ell_{\pi}(\pi)\right).
	\end{equation}
	In particular, choosing $\beta\in(0,1/T)$ yields:
	\begin{equation}
	\ell_{\pi^\prime}(\pi^\prime) - \ell_{\pi}(\pi) \leq \epsilon L + \beta\left(\ell_{\pi}(\pih) - \ell_{\pi}(\pi)\right).
	\label{eqn:horizon_dependent}
			\vspace{-0.05in}
	\end{equation}
\end{thm}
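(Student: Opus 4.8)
The plan is to bound the target difference by splitting it according to the two sources of change between $\pi$ and $\pip = \beta\pih + (1-\beta)\pi$: the change in the policy itself, and the resulting drift in the induced state distribution. Concretely, I would write
\[
\ell_{\pip}(\pip) - \ell_\pi(\pi) = \underbrace{\left(\ell_{\pip}(\pip) - \ell_{\pi}(\pip)\right)}_{\text{distribution drift}} + \underbrace{\left(\ell_{\pi}(\pip) - \ell_\pi(\pi)\right)}_{\text{policy change}},
\]
using the convention $\ell_{\pi_1}(\pi_2) = \E_{s\sim d_{\pi_1}}[\ell(\pi_2(s))]$. The second term measures how much worse $\pip$ is than $\pi$ under the \emph{fixed} reference distribution $d_\pi$, and the first measures the error introduced by evaluating the same policy $\pip$ under its own rolled-out distribution rather than $d_\pi$.

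The policy-change term is the easy one, which I would dispatch first using convexity. Since $\pip(s) = \beta\pih(s) + (1-\beta)\pi(s)$ and $\ell$ is convex, Jensen's inequality gives $\ell(\pip(s)) \le \beta\ell(\pih(s)) + (1-\beta)\ell(\pi(s))$ pointwise; taking $\E_{s\sim d_\pi}$ yields $\ell_\pi(\pip) - \ell_\pi(\pi) \le \beta\left(\ell_\pi(\pih) - \ell_\pi(\pi)\right)$, which is exactly the second summand in the claim.

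The distribution-drift term is the main obstacle, and it is where Condition~\ref{cond:stability_condition} and $L$-Lipschitzness of $\ell$ enter. For a fixed exogenous context sequence I would couple the deterministic roll-outs of $\pi$ and $\pip$ from a common start, writing $a_t = \pi([x_t,a_{t-1}])$ and $a_t' = \pip([x_t,a_{t-1}'])$, and track the per-step gap $\delta_t = \norm{a_t' - a_t}$. First I note that $\pip$, being a convex combination of two policies each satisfying Condition~\ref{cond:stability_condition}, is itself $1$-Lipschitz in its action argument. Adding and subtracting $\pip([x_t,a_{t-1}])$ and applying the triangle inequality splits $\delta_t$ into a contraction part $\norm{\pip([x_t,a_{t-1}']) - \pip([x_t,a_{t-1}])} \le \delta_{t-1}$ (Condition~\ref{cond:stability_condition}, same $x_t$) and a perturbation part $\norm{\pip([x_t,a_{t-1}]) - \pi([x_t,a_{t-1}])} = \beta\norm{\pih([x_t,a_{t-1}]) - \pi([x_t,a_{t-1}])} \le \beta\epsilon$, since $[x_t,a_{t-1}]$ lies on the $\pi$-trajectory, hence in the support of $d_\pi$. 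This gives the recursion $\delta_t \le \delta_{t-1} + \beta\epsilon$ with $\delta_0 = 0$, so $\delta_t \le t\beta\epsilon \le T\beta\epsilon$ for all $t \le T$.

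Finally I would combine the pieces. Using $L$-Lipschitzness of $\ell$ and Condition~\ref{cond:stability_condition} once more, $\abs{\ell(\pip(s_t')) - \ell(\pip(s_t))} \le L\norm{\pip(s_t') - \pip(s_t)} \le L\delta_{t-1} \le L T\beta\epsilon$ uniformly in $t$; averaging over $t=1,\dots,T$ bounds the distribution-drift term by $\beta\epsilon L T$. Adding the two summands yields the stated inequality, and substituting $\beta < 1/T$ replaces $\beta L T$ by the horizon-free $L$, giving the ``in particular'' bound. The one point requiring care is justifying the state-by-state coupling of $d_{\pip}$ and $d_\pi$: this is immediate for deterministic policies on a fixed context sequence (each $d_t^\pi$ is a point mass), and it lifts to the general case by conditioning on the context sequence before taking the outer expectation, since the per-step bound on $\delta_t$ is pointwise.
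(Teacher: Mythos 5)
Your proposal is correct and follows essentially the same route as the paper's proof: the same splitting into a distribution-drift term and a policy-change term (the paper does it pointwise per time step, you do it at the level of expected losses, but they coincide once the coupled roll-outs are unfolded), the same use of convexity for the reduction term, and the same recursion $\delta_t \leq \delta_{t-1} + \beta\epsilon$ giving $\delta_t \leq t\beta\epsilon$ via Condition \ref{cond:stability_condition} and the definition of $\epsilon$. The only cosmetic difference is that the paper averages $L\beta\epsilon t$ over $t$ to get the intermediate constant $\beta\epsilon L(T+1)/2$ before relaxing to $\beta\epsilon LT$, whereas you bound each term uniformly by $\beta\epsilon LT$; both yield the stated inequality.
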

Similar to SEARN, Theorem \ref{first_theorem} also requires $\beta\in (0,1/T)$ to ensure the RHS of \eqref{eqn:horizon_dependent} stays small.  However, note that the reduction term $\beta(\ell_{\pi}(\pih) -\ell_{\pi}(\pi))$ allows the bound to be strictly negative if the policy $\pih$ trained on $d_\pi$ significantly improves on $\ell_{\pi}(\pi)$ (i.e., guaranteed policy improvement). We observe empirically that this often happens, especially in early iterations of training. 

\vspace{-0.1em}
Under the mildly stronger Condition \ref{cond:self-smooth}, we remove the dependency on the time horizon $T$, which represents a much stronger guarantee compared to previous work.  
\vspace{-0.1em}
\begin{thm}[Policy Improvement] 
	\label{policy_improvement}
	Assume $\ell$ is convex and $L$-Lipschitz-continuous, and Condition \ref{cond:self-smooth} holds. Let $\epsilon = \max\limits_{s\sim d_\pi}\norm{\pih(s) - \pi(s)}$.  Then for $\beta\in(0,1)$: 
	\begin{equation}
	\ell_{\pip}(\pip) - \ell_{\pi}(\pi) \leq \frac{\beta\gamma\epsilon L}{(1-\beta)(1-\gamma)} + \beta(\ell_{\pi}(\pih) - \ell_{\pi}(\pi)).
	\label{eqn:policy_improvement_bound}
	\end{equation}
\end{thm}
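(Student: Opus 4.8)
The plan is to decompose the target quantity into a term that measures how much the \emph{policy} changes on the fixed reference distribution $d_\pi$, plus a term that measures how much the \emph{state distribution} drifts when we switch from $\pi$ to $\pip$. Concretely I would write
\begin{equation*}
\ell_{\pip}(\pip) - \ell_{\pi}(\pi) = \underbrace{\left(\ell_{\pip}(\pip) - \ell_{\pi}(\pip)\right)}_{\text{distribution drift}} + \underbrace{\left(\ell_{\pi}(\pip) - \ell_{\pi}(\pi)\right)}_{\text{on-distribution change}},
\end{equation*}
and bound the two pieces separately. The second piece follows from convexity alone; the first is the horizon-sensitive one and is where Condition \ref{cond:self-smooth} does the real work.

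For the on-distribution change, since $\pip(s) = \beta\pih(s) + (1-\beta)\pi(s)$ is a convex combination and $\ell$ is convex, Jensen's inequality gives $\ell(\pip(s)) \leq \beta\ell(\pih(s)) + (1-\beta)\ell(\pi(s))$ pointwise. Taking the expectation over $s\sim d_\pi$ and subtracting $\ell_\pi(\pi)$ yields $\ell_\pi(\pip) - \ell_\pi(\pi) \leq \beta(\ell_\pi(\pih) - \ell_\pi(\pi))$, which is exactly the second term of the claimed bound and needs no stability assumption.

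The crux is the distribution-drift term. I would couple the roll-outs of $\pi$ and $\pip$ on the \emph{same} exogenous context sequence and the same start state, producing action sequences $\{a_t\}$ and $\{a_t'\}$ with $a_t = \pi([x_t,a_{t-1}])$ and $a_t' = \pip([x_t,a_{t-1}'])$, and track the pathwise drift $\delta_t = \norm{a_t' - a_t}$. Adding and subtracting $\pip([x_t,a_{t-1}])$ splits $a_t'-a_t$ into two parts: the part $\pip([x_t,a_{t-1}']) - \pip([x_t,a_{t-1}])$ is at most $\gamma\delta_{t-1}$ because $\pip$, being a convex combination of $\gamma$-Lipschitz policies in $\Pi$, is itself $\gamma$-Lipschitz in the action argument; and the part $\pip([x_t,a_{t-1}]) - \pi([x_t,a_{t-1}]) = \beta(\pih([x_t,a_{t-1}]) - \pi([x_t,a_{t-1}]))$ has norm at most $\beta\epsilon$, since $[x_t,a_{t-1}]$ lies in the support of $d_\pi$. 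This gives the contraction recurrence $\delta_t \leq \gamma\delta_{t-1} + \beta\epsilon$ with $\delta_0 = 0$, whose solution is a geometric series bounded uniformly in $t$ by $\beta\epsilon/(1-\gamma)$, with no dependence on $T$ precisely because $\gamma < 1$. Finally, using $L$-Lipschitzness of $\ell$ and $\gamma$-Lipschitzness of $\pip$, each per-step gap $\abs{\ell(\pip(s_t')) - \ell(\pip(s_t))}$ is at most $L\gamma\delta_{t-1}$; averaging over $t$ bounds the distribution-drift term by $\beta\gamma\epsilon L/(1-\gamma)$, which matches the first term of the claim (the stated $1/(1-\beta)$ factor appears to be a conservative accounting of the same drift).

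The main obstacle is exactly this distribution-drift term: the naive argument behind Lemma \ref{lem:searn_analysis} and Theorem \ref{first_theorem} sums per-step discrepancies that each scale like $\beta\epsilon$, forcing an unavoidable factor of $T$. The stability contraction $\gamma < 1$ is what converts that linear accumulation into a convergent geometric series, so the delicate step is setting up the coupled recurrence $\delta_t \leq \gamma\delta_{t-1} + \beta\epsilon$ and verifying that the per-step injected error is genuinely $\beta\epsilon$ — which requires that the state at which $\pih - \pi$ is evaluated really is a $d_\pi$-state, so that the definition of $\epsilon$ applies.
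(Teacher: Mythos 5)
Your proposal is correct and follows essentially the same route as the paper's proof: the same split into an on-distribution term (handled by convexity) and a distribution-drift term (handled by coupling the two roll-outs on a common context stream and start state, and establishing the contraction $\delta_t \leq \gamma\delta_{t-1} + \beta\epsilon$ via Condition \ref{cond:self-smooth}). Your observation about the $1/(1-\beta)$ factor is also right — solving the recurrence exactly gives the tighter uniform bound $\beta\epsilon/(1-\gamma)$, whereas the paper's inductive hypothesis carries the extra, conservative $1/(1-\beta)$ factor; both imply the stated inequality.
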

\begin{cor}[Monotonic Improvement]
	\label{cor:monotonic_improvement}
Following the notation from Theorem \ref{policy_improvement}, let $\Delta =\ell_{\pi}(\pi) - \ell_{\pi}(\pih)$ and $\delta = \frac{\gamma\epsilon L}{1-\gamma}$. Then choosing step size $\beta = \frac{\Delta-\delta}{2\Delta}$, we have: 
\begin{equation}
\ell_{\pi^\prime}(\pi^\prime)-\ell_{\pi}(\pi) \leq -\frac{(\Delta-\delta)^2}{2(\Delta+\delta)}.
\label{cor:monotonic_bound}
\end{equation}
\end{cor}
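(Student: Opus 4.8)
The plan is to treat the bound from Theorem~\ref{policy_improvement} as a function of the free step size $\beta$ and simply evaluate it at the prescribed value. First I would rewrite the right-hand side of \eqref{eqn:policy_improvement_bound} in the corollary's notation: since $\delta = \frac{\gamma\epsilon L}{1-\gamma}$ we have $\frac{\beta\gamma\epsilon L}{(1-\beta)(1-\gamma)} = \frac{\beta\delta}{1-\beta}$, and since $\Delta = \ell_\pi(\pi) - \ell_\pi(\pih)$ we have $\ell_\pi(\pih) - \ell_\pi(\pi) = -\Delta$. Hence Theorem~\ref{policy_improvement} states, for every admissible $\beta \in (0,1)$,
\[
\ell_{\pip}(\pip) - \ell_\pi(\pi) \;\le\; g(\beta) \;\triangleq\; \frac{\beta\delta}{1-\beta} - \beta\Delta .
\]

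Next I would check that the prescribed $\beta = \frac{\Delta-\delta}{2\Delta}$ is a legal step size, i.e. lies in $(0,1)$, so that the theorem actually applies. This holds exactly in the interesting regime $\Delta > \delta \ge 0$, the case where the freshly trained $\pih$ improves on $\pi$ by more than the smoothness slack $\delta$; there $\beta \in (0,\tfrac12)$, and when $\Delta \le \delta$ no beneficial update is certified and the statement is vacuous. With this $\beta$ one gets the clean identity $1-\beta = \frac{\Delta+\delta}{2\Delta}$.

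Finally I would substitute. Writing $g(\beta) = \beta\bigl(\frac{\delta}{1-\beta} - \Delta\bigr)$ and using $\frac{\delta}{1-\beta} = \frac{2\Delta\delta}{\Delta+\delta}$ collapses the inner factor to $\frac{\delta}{1-\beta} - \Delta = -\frac{\Delta(\Delta-\delta)}{\Delta+\delta}$; multiplying by $\beta = \frac{\Delta-\delta}{2\Delta}$ yields $g(\beta) = -\frac{(\Delta-\delta)^2}{2(\Delta+\delta)}$, which is exactly \eqref{cor:monotonic_bound}.

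I do not expect a genuine obstacle: once the bound of Theorem~\ref{policy_improvement} is granted, the corollary is pure substitution and algebraic simplification. The only points needing care are (i) confirming $\beta \in (0,1)$ so the theorem's hypothesis is met, which pins down the regime $\Delta > \delta$, and (ii) recognizing that this $\beta$ is a deliberately simple \emph{rational} choice rather than the exact minimizer of $g$ — differentiating gives the true optimum $\beta^\star = 1 - \sqrt{\delta/\Delta}$ with value $-(\sqrt{\Delta}-\sqrt{\delta})^2$, but the stated $\beta$ trades a marginally weaker constant for a closed form that makes the strict decrease $\ell_{\pip}(\pip) < \ell_\pi(\pi)$ transparent whenever $\Delta > \delta$.
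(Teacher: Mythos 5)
Your proposal is correct and follows essentially the same route as the paper, which likewise obtains the corollary by substituting $\beta = \frac{\Delta-\delta}{2\Delta}$ into the bound of Theorem~\ref{policy_improvement} and simplifying (the paper states this substitution without showing the algebra). Your added remarks — that the theorem requires $\Delta > \delta$ for $\beta\in(0,1)$, and that the exact minimizer of the bound is $\beta^\star = 1-\sqrt{\delta/\Delta}$ with value $-(\sqrt{\Delta}-\sqrt{\delta})^2$ — are accurate and go slightly beyond what the paper records.
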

\vspace{-0.2em}
The terms $\epsilon$ and $\ell_{\pi}(\pih) - \ell_{\pi}(\pi)$ on the RHS of \eqref{eqn:horizon_dependent} and \eqref{eqn:policy_improvement_bound} come from the learning reduction, as they measure the ``distance''  between  $\pih$ and $\pi$ on the state distribution induced by $\pi$ (which forms the dataset to train $\pih$). In practice, both terms can be empirically estimated from the training round, thus allowing an estimate of $\beta$ to minimize the bound. 

Theorem \ref{policy_improvement} justifies using an adaptive and more aggressive interpolation parameter $\beta$ to update policies. In the worst case, setting $\beta$ close to $0$ will ensure the bound from  \eqref{eqn:policy_improvement_bound} to be close to $0$, which is consistent with SEARN's result. More generally, monotonic policy improvement can be guaranteed for appropriate choice of $\beta$, as seen from Corollary \ref{cor:monotonic_improvement}. This strict policy improvement was not possible under previous iterative learning reduction approaches such as SEARN and DAgger, and is enabled in our setting due to exploiting the smoothness conditions.
\vspace{-0.6em}
\subsection{Smooth Feedback Analysis}
\vspace{-0.5em}
\textbf{Smooth Feedback Does Not Hurt:} Recall from Section \ref{sec:method} that one way to simulate ``virtual'' feedback for training a new $\pih$ is to set the target $\hat{a}_t=\sigma a_t+(1-\sigma) a_t^*$ for $\sigma\in(0,1)$, where smooth feedback corresponds to $\sigma\rightarrow 1$.
To see that simulating smooth ``virtual'' feedback target does not hurt the training progress, we alternatively view SIMILE as performing gradient descent in a smooth function space \cite{mason1999functional}. Define the cost functional $C:\Pi\rightarrow\mathbb{R}$ over policy space to be the average imitation loss over $\mathcal{S}$ as $C(\pi) = \int\limits_{\mathcal{S}}\norm{\pi(s)-\pis(s)}^2 dP(s)$. The gradient (G\^ateaux derivative) of $C(\pi)$ w.r.t. $\pi$ is:
$$\nabla C(\pi)(s) = \frac{\partial C(\pi+\alpha\delta_{s})}{\partial\alpha}\Bigr\vert_{\alpha=0} = 2(\pi(s) - \pis(s)),$$ 
where $\delta_s$ is Dirac delta function centered at s. By first order approximation $C(\pip) = C(\beta\pih+(1-\beta)\pi) = C(\pi +\beta(\pih-\pi)) \approx C(\pi) +  \beta\inner{\nabla C(\pi), \pih-\pi}$. Like traditional gradient descent, we want to choose $\pih$ such that the update moves the functional along the direction of negative gradient. In other words, we want to learn $\pih\in\Pi$ such that $\inner{\nabla C(\pi),\pih-\pi} \ll 0$. We can evaluate this inner product along the states induced by $\pi$. We thus have the estimate:
\begin{align*}
			\vspace{-0.1in}
\inner{\nabla C(\pi), \pih - \pi} 
&\approx\frac{2}{T}\sum_{t=1}^{T}(\pi(s_t)-\pis(s_t))(\pih(s_t) -\pi(s_t)) \\
&= \frac{2}{T}\sum_{t=1}^{T}(a_t - a^*_t)(\pih([x_t,a_{t-1}])-a_t).
\end{align*}  
Since we want $\inner{\nabla C(\pi), \pih - \pi}<0$, this motivates the construction of new data set $\mathcal{D}$ with states $\{[x_t,a_{t-1}]\}_{t=1}^T$ and labels $\{\widehat{a}_t\}_{t=1}^T$ to train a new policy $\pih$, where we want $(a_t-a_t^*)(\widehat{a}_t-a_t)<0$. A sufficient solution is to set target $\widehat{a}_t = \sigma a_t+(1-\sigma)a_t^*$ (Section \ref{sec:method}), as this will point the gradient in negative direction, allowing the learner to make progress. 




\textbf{Smooth Feedback is Sometimes Necessary:} When the current policy performs poorly, smooth virtual feedback may be required to ensure stable learning, i.e. producing a feasible smooth policy at each training round. 
We formalize this notion of feasibility by considering the smooth policy class $\Pi_{\lambda}$ in Example \ref{example:smooth_policy_class}.  Recall that smooth regularization of $\Pi_{\lambda}$ via $\mathcal{H}$ encourages the next action to be close to the previous action. Thus a natural way to measure smoothness of $\pi\in\Pi_{\lambda}$ is via the average first order difference of consecutive actions $\frac{1}{T}\sum_{t=1}^T \norm{a_t - a_{t-1}}$. In particular, we want to explicitly constrain this  difference relative to the expert trajectory $\frac{1}{T}\sum_{t=1}^T \norm{a_t - a_{t-1}} \leq \eta$ at each iteration, where $\eta\propto \frac{1}{T}\sum_{t=1}^T\norm{a_t^* - a_{t-1}^*}$. 

When $\pi$ performs poorly, i.e. the "average gap" between current trajectory $\{a_t\}$ and $\{a_t^*\}$ is large, the training target for $\pih$ should be lowered to ensure learning a smooth policy is feasible, as inferred from the following proposition. In practice, we typically employ smooth virtual feedback in early iterations when policies tend to perform worse. 
\vspace{-0.5em}
\begin{prop}[]
	\label{prop:smooth_expert}
	Let $\omega$ be the average supervised training error from $\mathcal{F}$, i.e. $\omega = \min\limits_{f\in\mathcal{F}}\mathbb{E}_{x\sim\mathcal{X}}\left[\norm{f([x,0])-a^*}\right]$. Let the rolled-out trajectory of current policy $\pi$ be $\{a_t\}$. If the average gap between $\pi$ and $\pis$ is such that $\mathbb{E}_{t\sim \text{Uniform}[1:T]}\left[\norm{a_t^*-a_{t-1}}\right] \geq 3\omega+\eta(1+\lambda)$, then using $\{a_t^*\}$ as feedback will cause the trained policy $\pih$ to be non-smooth, i.e.:
	\begin{equation}
	\mathbb{E}_{t\sim \text{Uniform}[1:T]}\left[\norm{\hat{a}_t-\hat{a}_{t-1}}\right] \geq \eta,
	\end{equation}
	for $\{\hat{a}_t\}$ the rolled-out trajectory of $\pih$.
\end{prop}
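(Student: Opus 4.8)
The plan is to use the closed form of the smooth policy class $\Pi_{\lambda}$ from Example \ref{example:smooth_policy_class} to express the smoothness measure of $\pih$'s roll-out directly through the trained regressor $\hat{f}$, and then to lower-bound it using the assumed gap together with the stability conditions. Writing the learned policy as $\pih([x,a]) = \frac{\hat{f}([x,a]) + \lambda a}{1+\lambda}$ and using $\hat{a}_t = \pih([x_t,\hat{a}_{t-1}])$, a one-line computation gives $\hat{a}_t - \hat{a}_{t-1} = \frac{\hat{f}([x_t,\hat{a}_{t-1}]) - \hat{a}_{t-1}}{1+\lambda}$. Hence the target inequality $\E_{t}[\norm{\hat{a}_t - \hat{a}_{t-1}}] \geq \eta$ is equivalent to
\[
\E_{t}\!\left[\norm{\hat{f}([x_t,\hat{a}_{t-1}]) - \hat{a}_{t-1}}\right] \geq \eta(1+\lambda),
\]
which is the quantity I would bound from below, with $\E_{t}$ denoting the uniform expectation over $t\in[1{:}T]$.

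Second, I would lower-bound this quantity by inserting the expert action $a_t^*$ and the current action $a_{t-1}$ and applying the reverse triangle inequality:
\begin{align*}
\norm{\hat{f}([x_t,\hat{a}_{t-1}]) - \hat{a}_{t-1}} &\geq \norm{a_t^* - a_{t-1}} - \norm{a_t^* - \hat{f}([x_t,\hat{a}_{t-1}])}\\
&\quad - \norm{a_{t-1} - \hat{a}_{t-1}}.
\end{align*}
Taking $\E_{t}$, the first term is exactly the assumed gap, hence at least $3\omega + \eta(1+\lambda)$. It then remains to show that the two residual terms, namely the approximation error of $\hat{f}$ at the rolled-out states and the drift between the current and new roll-outs, together contribute at most $3\omega$ in expectation; subtracting them from the gap leaves the desired $\eta(1+\lambda)$.

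Third, for the approximation term I would use that $\hat{f}$ is the supervised fit to the targets $\{a_t^*\}$, so its error is controlled by the best-in-class error $\omega = \min_{f}\E_x[\norm{f([x,0]) - a^*}]$; since appending the action component to the features can only help the regressor, the training error of $\hat{f}$ at states $[x_t,a_{t-1}]$ is at most $\omega$, and the shift from $[x_t,a_{t-1}]$ to the roll-out states $[x_t,\hat{a}_{t-1}]$ is absorbed using the Lipschitz behavior of the policy class (Condition \ref{cond:self-smooth}, a consequence of Lemma \ref{lem:H_bound}). For the drift term I would set up the recursion $\norm{a_t - \hat{a}_t}\leq \norm{\pi([x_t,a_{t-1}]) - \pih([x_t,a_{t-1}])} + \gamma\norm{a_{t-1} - \hat{a}_{t-1}}$, where $\gamma<1$ is furnished by Condition \ref{cond:self-smooth}. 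Unrolling this geometric recursion bounds the average drift by $\frac{1}{1-\gamma}$ times the average per-state policy gap, which is itself $O(\omega)$ because $\pi([x,a]) - \pih([x,a]) = \frac{f([x,a]) - \hat{f}([x,a])}{1+\lambda}$ and the numerator is again charged to the supervised error on $\{a_t^*\}$.

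The main obstacle is precisely this bookkeeping: controlling the state-shift inside the approximation term and the roll-out drift simultaneously, so that their total is at most $3\omega$ while remaining independent of the horizon $T$. The horizon-independence is exactly what forces the use of Condition \ref{cond:self-smooth} (the strict contraction $\gamma<1$) rather than merely Condition \ref{cond:stability_condition}: without $\gamma<1$ the drift recursion accumulates linearly in $t$ and reintroduces a $T$ factor, whereas with $\gamma<1$ the geometric series closes and each residual term can be charged to a constant multiple of $\omega$, completing the bound.
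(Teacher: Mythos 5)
Your proposal follows essentially the same route as the paper's proof: reduce the first-order difference to $\hat{a}_t - \hat{a}_{t-1} = \frac{\hat{f}([x_t,\hat{a}_{t-1}]) - \hat{a}_{t-1}}{1+\lambda}$, insert $a_t^*$ and $a_{t-1}$ and apply the reverse triangle inequality, charge the regressor's error to $\omega$, and bound the roll-out drift $\norm{a_{t-1}-\hat{a}_{t-1}}$ by a contraction argument. The one place you genuinely diverge is the drift term. The paper does not invoke Condition \ref{cond:self-smooth} here at all; it uses the exact identity $a_t - \hat{a}_t = \frac{\lambda}{1+\lambda}(a_{t-1}-\hat{a}_{t-1}) + \frac{f([x_t,a_{t-1}])-\hat{f}([x_t,\hat{a}_{t-1}])}{1+\lambda}$, whose contraction factor $\frac{\lambda}{1+\lambda}$ is intrinsic to $\Pi_\lambda$, and whose forcing term is charged through $a_t^*$ to $2\omega$ at the \emph{respective} visited states (each of $f$ and $\hat{f}$ is assumed to have average error $\leq\omega$ on its own roll-out, via the \doubleQuote{discard the action feature} argument). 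Summing the recursion then gives $\E_t[\norm{a_t-\hat{a}_t}]\leq 2\omega$ exactly, which is what produces the constant $3\omega = \omega + 2\omega$ in the hypothesis. Your version, with a generic $\gamma<1$ from Condition \ref{cond:self-smooth} and a per-state policy gap of $\frac{2\omega}{1+\lambda}$, yields a drift bound of order $\frac{2\omega}{(1-\gamma)(1+\lambda)}$; this equals $2\omega$ only when $\gamma = \frac{\lambda}{1+\lambda}$, and otherwise you would prove the proposition with a different constant in place of $3$. You also take on extra bookkeeping (Lipschitz-absorbing the shift from $[x_t,a_{t-1}]$ to $[x_t,\hat{a}_{t-1}]$ inside the approximation term) that the paper avoids by stating its $\omega$-assumption directly at the rolled-out states of $\pih$. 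So the plan is sound in structure, but to recover the stated threshold $3\omega+\eta(1+\lambda)$ you should replace the generic $\gamma$-contraction with the exact $\frac{\lambda}{1+\lambda}$ recursion that the closed form of $\Pi_\lambda$ hands you.
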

\vspace{-0.15in}
\section{Experiments}
\vspace{-0.5em}
\textbf{Automated Camera Planning.} 
We evaluate SIMILE in a case study of automated camera planning for sport broadcasting \cite{chen2015mimicking,smooth_camera}.
Given noisy tracking of players as raw input data $\{x_t\}_{t=1}^T$, and demonstrated pan camera angles from professional human operator as $\{a_t^*\}_{t=1}^T$, the goal is to learn a policy $\pi$ that produces trajectory $\{a_t\}_{t=1}^T$ that is both smooth and accurate relative to $\{a_t^*\}_{t=1}^T$. Smoothness is critical in camera control: fluid movements which maintain adequate framing are preferable to jittery motions which constantly pursue perfect tracking \cite{gaddam2015cameraman}. In this setting, time horizon $T$ is the duration of the event multiplied by rate of sampling. Thus $T$ tends to be very large. 

\vspace{-0.5em}
\textbf{Smooth Policy Class.}
We use a smooth policy class following Example \ref{example:autoregressor_policy_class}: regression tree ensembles $\mathcal{F}$ regularized by a class of linear autoregressor functions $\mathcal{H}$ \cite{smooth_camera}. See Appendix \ref{sec:tree} for more details.  

\vspace{-0.5em}
\textbf{Summary of Results.}  
\vspace{-0.5em}
\begin{itemize}[nolistsep]
\item Using our smooth policy class leads to dramatically smoother trajectories than not regularizing using $\mathcal{H}$. 
\item Using our adaptive learning rate leads to much faster convergence compared to conservative learning rates from SEARN \cite{daume2009search}.
\item Using smooth feedback ensures stable learning of smooth policies at each iteration.
\item Deterministic policy interpolation performs better than stochastic interpolation used in SEARN.
\end{itemize}
\begin{figure}[t]
	\centering
	\includegraphics[scale=0.5]{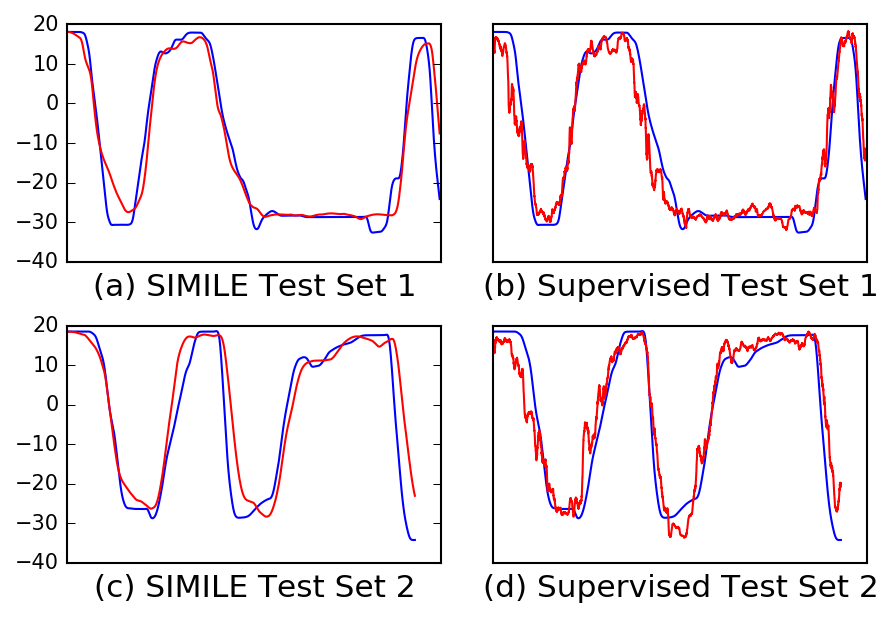}
	\vspace{-0.18in}
	\caption{Expert (blue) and predicted (red) camera pan angles. Left: SIMILE with $<$10 iterations. Right: non-smooth policy.}
	\label{fig:test_set_performance}
\end{figure}
\begin{figure}[t]
\vspace{-0.5em}
	\centering
	\includegraphics[scale=0.25]{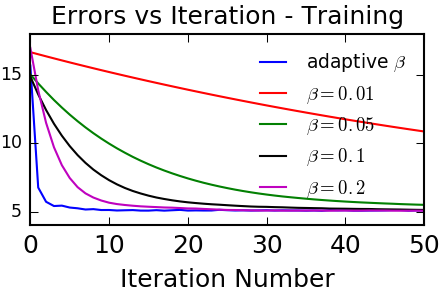}
	\includegraphics[scale=0.25]{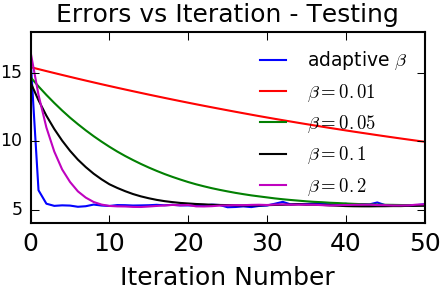}
	\vspace{-0.15in}
	\caption{Adaptive versus fixed interpolation parameter $\beta$.}
	\label{fig:adaptive_vs_fixed_beta}
\end{figure}
\textbf{Smooth versus Non-Smooth Policy Classes.} 
Figure \ref{fig:test_set_performance} shows a comparison of using a smooth policy class versus a non-smooth one (e.g., not using $\mathcal{H}$).
We see that our approach can reliably learn to predict trajectories that are both smooth and accurate.

\vspace{-0.5em}
\textbf{Adaptive vs. Fixed $\beta$:}
One can, in principle, train using SEARN, which requires a very conservative $\beta$ in order to guarantee convergence. In contrast, SIMILE adaptively selects $\beta$ based on relative empirical loss of $\pi$ and $\pih$ (Line 9 of Algorithm \ref{algo:simile}). Let $\texttt{error}(\pih)$ and $\texttt{error}(\pi)$ denote the mean-squared errors of rolled-out trajectories $\{\hat{a}_t\}$, $\{a_t\}$, respectively, w.r.t. ground truth $\{a_t^*\}$. We can set $\beta$ as:
\begin{equation}
\hat{\beta} = \frac{\texttt{error}(\pi)}{\texttt{error}(\pih) + \texttt{error}(\pi)},
\label{eqn:beta}
\end{equation}
which encourages the learner to disregard bad policies when interpolating, thus allowing fast convergence to a good policy (see Theorem \ref{policy_improvement}). 
Figure \ref{fig:adaptive_vs_fixed_beta} compares the convergence rate of SIMILE using adaptive $\beta$ versus conservative fixed values of $\beta$ commonly used in SEARN \cite{daume2009search}. We see that adaptively choosing $\beta$ enjoys substantially faster convergence. Note that very large fixed $\beta$ may overshoot and worsen the combined policy after a few initial improvements.
\begin{figure}[t]
\centering
	\includegraphics[scale=0.5]{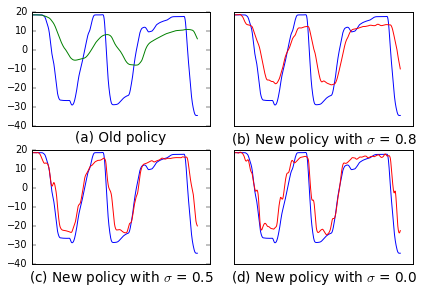}
		\vspace{-0.2in}
	\caption{Comparing different values of $\sigma$.}
	\label{fig:feedback_varying_sigma}
\end{figure}

\vspace{-0.5em}
\textbf{Smooth Feedback Generation:}
We set the target labels to $\hat{a}_t^n = \sigma a_t^n+(1-\sigma)a_t^*$ for $0<\sigma<1$ (Line 6 of Algorithm \ref{algo:simile}). 
Larger $\sigma$ corresponds to smoother ($\hat{a}_t^n$ is closer to $a_{t-1}^n$) but less accurate target (further from $a_t^*$), as seen in Figure \ref{fig:feedback_varying_sigma}. 
Figure \ref{fig:smooth_vs_distance_error} shows the trade-off between 
\begin{wrapfigure}{r}{0.25\textwidth}
		\vspace{-0.2in}
	\centering
	\includegraphics[scale=0.3]{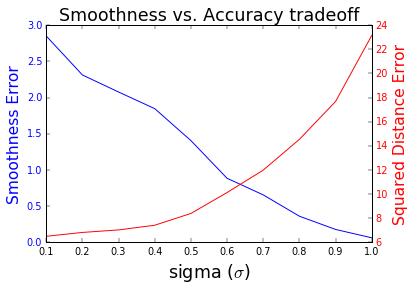}
	\vspace{-0.35in}
	\caption{}
	\label{fig:smooth_vs_distance_error}
	\vspace{-0.15in}
\end{wrapfigure}
smoothness loss (blue line, measured by first order difference in Proposition \ref{prop:smooth_expert}) and imitation loss (red line, measured by mean squared distance) for varying $\sigma$. We navigate this trade-off by setting $\sigma$ closer to 1 in early iterations, and have $\sigma\rightarrow 0$ as $n$ increases. This ``gradual increase'' produces more stable policies, especially during early iterations where the learning policy tends to perform poorly (as formalized in Proposition \ref{prop:smooth_expert}). In Figure \ref{fig:feedback_varying_sigma}, when the initial policy (green trajectory) has poor performance, setting smooth targets (Figure \ref{fig:feedback_varying_sigma}b) allows learning a smooth policy in the subsequent round, in contrast to more accurate but less stable performance of ``difficult'' targets with low $\sigma$ (Figure \ref{fig:feedback_varying_sigma}c-d).
Figure \ref{fig:learning_progress} visualizes the behavior of the the intermediate policies learned by SIMILE, where we can see that each intermediate policy is a smooth policy.

\begin{figure}[t]
\centering
	\hspace{-.05in}\includegraphics[scale=0.5]{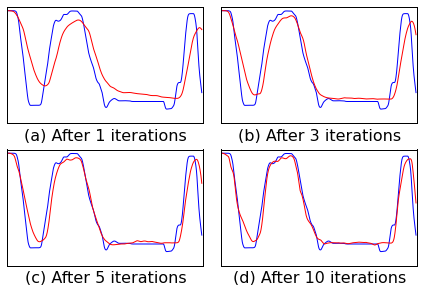}
	\vspace{-0.2in}
	\caption{Performance after different number of iterations.}
	\label{fig:learning_progress}
\end{figure}

\vspace{-0.5em}
\textbf{Deterministic vs. Stochastic Interpolation:}
Finally, we evaluate the benefits of using deterministic policy averaging intead of stochastically combine different policies, as done in SEARN. To control for other factors, we set $\beta$ to a fixed value of $0.5$, and keep the new training dataset $\mathcal{D}_n$ the same for each iteration $n$. The average imitation loss of stochastic policy sampling are evaluated after 50 stochastic roll-outs at each iterations. This average stochastic policy error tends to be higher compared to the empirical error of the deterministic trajectory, as seen from Figure \ref{fig:det_vs_sto_error}, and confirms our finding from Corollary \ref{cor:det_better_sto}. 
\vspace{-0.15in}
\begin{figure}[h]
\centering
 \includegraphics[scale=0.55]{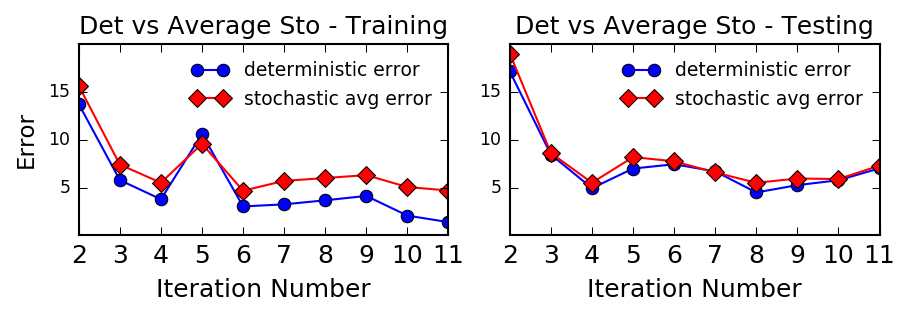}
		\vspace{-0.3in}
	\caption{Deterministic policy error vs. average stochastic policy error for $\beta =0.5$ and 50 roll-outs of the stochastic policies.}
	\label{fig:det_vs_sto_error}
\end{figure}

\vspace{-2em}
\section{Conclusion}
\vspace{-0.5em}
We formalized the problem of smooth imitation learning for online sequence prediction, which is a variant of imitation learning that uses a notion of a smooth policy class.
We proposed SIMILE (\textbf{S}mooth \textbf{IMI}tation \textbf{LE}arning), which is an iterative learning reduction approach to learning smooth policies from expert demonstrations in a continuous and dynamic environment.
   SIMILE utilizes an adaptive learning rate that provably allows much faster convergence compared to previous learning reduction approaches, and also enjoys better sample complexity than previous work by being fully deterministic and allowing for virtual simulation of training labels. We validated the efficiency and practicality of our approach on a setting of online camera planning.

\newpage
\bibliography{icml_paper}
\bibliographystyle{icml2016}

\newpage

\appendix
\twocolumn[
\icmltitle{Supplementary Materials for \\Smooth Imitation Learning for Online Sequence Prediction}
]
\setlength{\abovedisplayskip}{4pt}
\setlength{\belowdisplayskip}{4pt}
\setlength{\abovedisplayshortskip}{3pt}
\setlength{\belowdisplayshortskip}{3pt}
\section{Detailed Theoretical Analysis and Proofs}
\subsection{Proof of lemma \ref{lem:H_bound}}
\begin{lemma_statement}{(Lemma \ref{lem:H_bound})}
	For a fixed $x$, define $\pi(\left[x,a\right]) \triangleq\varphi(a)$. If $\varphi$ is non-negative and $H$-smooth w.r.t. $a$., then: 
	\begin{equation}
	\forall a,a':\ \left(\varphi(a)-\varphi(a^\prime)\right)^2\leq 6H\left(\varphi(a)+\varphi(a^\prime)\right)\norm{a-a^\prime}^2.\nonumber
	\end{equation}
\end{lemma_statement}
The proof of Lemma \ref{lem:H_bound} rests on 2 properties of $H$-smooth functions (differentiable) in $\mathbb{R}^1$, as stated below
\begin{lem}[Self-bounding property of Lipschitz-smooth functions]
	\label{lem:self-bounding}
	Let $\phi:\mathbb{R}\rightarrow\mathbb{R}$ be an $H$-smooth non-negative function. Then for all $a\in\mathbb{R}$: $\enskip \abs{\nabla\phi(a)} \leq \sqrt{4H\phi(a)}$ 
\end{lem}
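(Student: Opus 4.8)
The plan is to derive the standard self-bounding inequality $\nabla\phi(a)^2 \le 2H\phi(a)$, which immediately implies the stated (looser) bound $\abs{\nabla\phi(a)} \le \sqrt{4H\phi(a)}$, since $\sqrt{2H\phi(a)} \le \sqrt{4H\phi(a)}$. The constant $4H$ in the statement is deliberately slack, so no sharpness is lost by this route.

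First I would convert the $H$-smoothness hypothesis, namely $\abs{\nabla\phi(a) - \nabla\phi(a')} \le H\abs{a-a'}$, into the quadratic upper bound commonly called the descent lemma: for all $a,b\in\mathbb{R}$,
$$\phi(b) \le \phi(a) + \nabla\phi(a)(b-a) + \frac{H}{2}(b-a)^2.$$
This follows by writing $\phi(b) - \phi(a) - \nabla\phi(a)(b-a) = \int_0^1 \left(\nabla\phi(a+s(b-a)) - \nabla\phi(a)\right)(b-a)\,ds$ and bounding the integrand using the Lipschitz property of $\nabla\phi$, which contributes a factor $\int_0^1 Hs\,ds = H/2$.

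Next I would exploit non-negativity. Since $\phi(b)\ge 0$ for every $b$, the descent lemma gives, for each fixed $a$ and writing $t := b-a$,
$$0 \le \frac{H}{2}t^2 + \nabla\phi(a)\,t + \phi(a) \qquad \forall t\in\mathbb{R}.$$
Because this quadratic in $t$ has positive leading coefficient $H/2$ and is non-negative everywhere, its discriminant must be non-positive, i.e. $\nabla\phi(a)^2 - 2H\phi(a) \le 0$. Equivalently, one may just substitute the minimizer $b = a - \nabla\phi(a)/H$ into the descent lemma to get $0 \le \phi(a) - \nabla\phi(a)^2/(2H)$. Either route yields $\nabla\phi(a)^2 \le 2H\phi(a)$, and taking square roots completes the proof.

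There is essentially no deep obstacle here; the only step requiring genuine care is establishing the descent lemma cleanly from the gradient-Lipschitz definition of $H$-smoothness (rather than silently assuming it), via the integral-remainder argument above. This scalar self-bounding lemma is then the workhorse that, applied coordinatewise and combined with the boundedness $\vec{0}\preceq a\preceq R\vec{1}$ of $\mathcal{A}$, produces Lemma~\ref{lem:H_bound}.
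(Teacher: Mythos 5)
Your proof is correct and takes essentially the same route as the paper's: both convert $H$-smoothness into a quadratic upper bound in the displacement, invoke non-negativity of $\phi$, and substitute the minimizing point. The only difference is that the paper uses the mean value theorem to obtain the cruder remainder $H\abs{a'-a}^2$ and plugs in $a' = a - \nabla\phi(a)/(2H)$ to land directly on the constant $4H$, whereas your integral-remainder descent lemma gives the sharper $\nabla\phi(a)^2 \le 2H\phi(a)$, which as you note implies the stated bound.
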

\begin{proof}
By mean value theorem, for any $a, a^\prime$ we have $\exists \enskip\eta\in\left(a,a^\prime\right)$ (or $\left(a^\prime,a\right)$) such that $\phi(a^\prime) = \phi(a)+\nabla\phi(\eta)(a^\prime - a)$. Since $\phi$ is non-negative,
\begin{align*}
0\leq \phi(a^\prime) &= \phi(a)+\nabla\phi(a)(a^\prime-a) \\
&\qquad+(\nabla\phi(\eta)-\nabla\phi(a))(a^\prime-a) \\
&\leq \phi(a)+\nabla\phi(a)(a^\prime-a) + H\abs{\eta-a}\abs{a^\prime-a} \\
&\leq \phi(a)+\nabla\phi(a)(a^\prime-a) + H\abs{a^\prime-a}^2
\end{align*}
Choosing $a^\prime = a - \frac{\nabla\phi(a)}{2H}$ proves the lemma.
\end{proof}
\begin{lem}[1-d Case \cite{srebro2010smoothness}]
	\label{lem:1d_H_bound}
	Let $\phi:\mathbb{R}\rightarrow\mathbb{R}$ be an $H$-smooth non-negative function. Then for all $a, a^\prime\in\mathbb{R}$:
	\begin{equation*}
\left(\phi(a)-\phi(a^\prime)\right)^2 \leq 6H\left(\phi(a)+\phi(a^\prime)\right)\left(a-a^\prime\right)^2
	\end{equation*}
\end{lem}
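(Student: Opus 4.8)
The plan is to reduce the claim to the single fact that $\sqrt{\phi}$ is $\sqrt{H}$-Lipschitz, and then to combine this with the elementary factorization
\begin{equation*}
\left(\phi(a)-\phi(a')\right)^2 = \left(\sqrt{\phi(a)}-\sqrt{\phi(a')}\right)^2\left(\sqrt{\phi(a)}+\sqrt{\phi(a')}\right)^2 .
\end{equation*}
Both factors on the right are then easy to control: the first by the Lipschitz estimate, the second by the arithmetic–geometric mean inequality. This route is cleaner than a direct mean-value-theorem argument, and as a bonus it yields a better constant than the one stated (which implies the stated bound a fortiori).

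First I would prove that $g\triangleq\sqrt{\phi}$ is $\sqrt{H}$-Lipschitz. On any interval where $\phi>0$, the function $g$ is differentiable with $g'(a)=\nabla\phi(a)/\left(2\sqrt{\phi(a)}\right)$, so the self-bounding property of Lemma \ref{lem:self-bounding} gives $\abs{g'(a)}=\abs{\nabla\phi(a)}/\left(2\sqrt{\phi(a)}\right)\le \sqrt{4H\phi(a)}/\left(2\sqrt{\phi(a)}\right)=\sqrt{H}$. Integrating along the segment from $a'$ to $a$ yields $\abs{\sqrt{\phi(a)}-\sqrt{\phi(a')}}\le \sqrt{H}\,\abs{a-a'}$, and hence $\left(\sqrt{\phi(a)}-\sqrt{\phi(a')}\right)^2\le H(a-a')^2$. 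For the second factor, expanding and applying $2\sqrt{\phi(a)\phi(a')}\le \phi(a)+\phi(a')$ gives $\left(\sqrt{\phi(a)}+\sqrt{\phi(a')}\right)^2=\phi(a)+\phi(a')+2\sqrt{\phi(a)\phi(a')}\le 2\left(\phi(a)+\phi(a')\right)$.

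Multiplying the two bounds through the factorization produces $\left(\phi(a)-\phi(a')\right)^2\le 2H\left(\phi(a)+\phi(a')\right)(a-a')^2$, which is stronger than the claimed inequality since $2\le 6$; the factor $6$ in the statement is simply loose. The one delicate point, and the step I expect to require the most care, is that $g=\sqrt{\phi}$ fails to be differentiable exactly at the zeros of $\phi$, so the integration above cannot be carried out naively across the zero set. I would handle this by observing that at any point where $\phi=0$ nonnegativity forces a global minimum, so $\nabla\phi=0$ there as well; the self-bounding bound $\abs{\nabla\phi}\le\sqrt{4H\phi}$ then controls how quickly $\phi$ can leave $0$, giving a Grönwall/ODE-comparison argument that recovers the $\sqrt{H}$-Lipschitz estimate globally. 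Equivalently, $g$ is continuous everywhere and $\sqrt{H}$-Lipschitz on each maximal interval of positivity, and these pieces patch together because $g\to 0$ at the endpoints of such intervals. Once this technicality is dispatched, the remaining algebra is routine.
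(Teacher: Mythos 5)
Your proof is correct, and it takes a genuinely different route from the paper's. The paper follows the argument of \citet{srebro2010smoothness} directly: it writes $\phi(a)-\phi(a')=\nabla\phi(\eta)(a-a')$ by the mean value theorem, bounds $\abs{\nabla\phi(\eta)}\leq\abs{\nabla\phi(a)}+H\abs{a-a'}$, and then splits into two cases according to whether $\abs{a-a'}$ is smaller or larger than $\abs{\nabla\phi(a)}/(5H)$, invoking the self-bounding Lemma \ref{lem:self-bounding} in the first case; the threshold $1/(5H)$ is what produces the constant $6$. You instead use the self-bounding lemma once, to show $\sqrt{\phi}$ is $\sqrt{H}$-Lipschitz, and then conclude via the factorization $(\phi(a)-\phi(a'))^2=(\sqrt{\phi(a)}-\sqrt{\phi(a')})^2(\sqrt{\phi(a)}+\sqrt{\phi(a')})^2$ together with AM--GM. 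Both proofs rest on the same key ingredient (the self-bounding property), but yours replaces the case analysis with a structural observation and yields the sharper constant $2H$ in place of $6H$, which of course implies the stated bound. The one point of genuine delicacy in your route --- non-differentiability of $\sqrt{\phi}$ on the zero set of $\phi$ --- is a real issue that the paper's argument sidesteps entirely, but your patching argument handles it: on each maximal interval of positivity the Lipschitz bound holds by the mean value theorem, and across a zero one has $\abs{g(a)-g(a')}\leq\max(g(a),g(a'))$ with each of $g(a),g(a')$ bounded by $\sqrt{H}$ times the distance to the nearest interior zero. So the trade-off is: the paper's proof is entirely elementary and never leaves the differentiable setting, while yours is shorter, conceptually cleaner, and quantitatively stronger at the cost of one technical patch.
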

\begin{proof}
As before, $\exists \eta\in(a,a^\prime)$ such that $\phi(a^\prime) - \phi(a) = \nabla\phi(\eta)(a^\prime - a)$. By assumption of $\phi$, we have $\abs{\nabla\phi(\eta) - \nabla\phi(a)}\leq H\abs{\eta-a}\leq H\abs{a^\prime-a}$. Thus we have:
\begin{equation}
\label{eqn:phi_gradient_bound}
\abs{\nabla\phi(\eta)} \leq \abs{\nabla\phi(a)} + H\abs{a-a^\prime}
\end{equation}
Consider two cases:

Case 1: If $\abs{a - a^\prime} \leq \frac{\abs{\nabla\phi(a)}}{5H}$, then by equation \ref{eqn:phi_gradient_bound} we have $\abs{\nabla\phi(\eta)}\leq 6/5\abs{\nabla\phi(a)}$. Thus
\begin{align*}
\left(\phi(a)-\phi(a^\prime)\right)^2  &= \left(\nabla\phi(\eta)\right)^2 \left(a-a^\prime\right)^2 \\
&\leq \frac{36}{25}\left(\nabla\phi(a)\right)^2 \left(a-a^\prime\right)^2 \\
&\leq \frac{144}{25}H\phi(a) \left(a-a^\prime\right)^2
\end{align*}
by lemma \ref{lem:self-bounding}. Therefore, $\left(\phi(a)-\phi(a^\prime)\right)^2 \leq 6H\phi(a)\left(a-a^\prime\right)^2 \leq 6H\left(\phi(a)+\phi(a^\prime)\right)\left(a-a^\prime\right)^2$

Case 2: If $\abs{a - a^\prime} > \frac{\abs{\nabla\phi(a)}}{5H}$, then equation \ref{eqn:phi_gradient_bound} gives $\abs{\nabla\phi(\eta)} \leq 6H\abs{a-a^\prime}$. Once again
\begin{align*}
\left(\phi(a)-\phi(a^\prime)\right)^2 &= \left(\phi(a)-\phi(a^\prime)\right)\nabla\phi(\eta)  \left(a-a^\prime\right) \\
&\leq \abs{\left(\phi(a)-\phi(a^\prime)\right)}\abs{\nabla\phi(\eta)}\abs{\left(a-a^\prime\right)} \\
&\leq \abs{\left(\phi(a)-\phi(a^\prime)\right)} \left(6H\left(a-a^\prime\right)^2\right) \\
&\leq 6H\left(\phi(a)+\phi(a^\prime)\right)\left(a-a^\prime\right)^2
\end{align*}
\end{proof}
\begin{proof}[Proof of Lemma \ref{lem:H_bound}]
The extension to the multi-dimensional case is straightforward. For any $a, a^\prime\in\mathbb{R}^k$, consider the function $\phi:\mathbb{R}\rightarrow\mathbb{R}$ such that $\phi(t) = \varphi((1-t)a+ta^\prime)$, then $\phi$ is a differentiable, non-negative function and $\nabla_{t}(\phi(t)) = \inner{\nabla\varphi(a+t(a^\prime-a)), a^\prime-a}$. Thus:
\begin{align*}
&\abs{\phi^\prime(t_1) - \phi^\prime(t_2)} = \vert\langle\nabla\varphi(a+t_1(a^\prime-a)) - \\ &\qquad\qquad\qquad\qquad\qquad\nabla\varphi(a+t_2(a^\prime-a)), a^\prime-a \rangle\vert \\
&\leq \norm{\nabla\varphi(a+t_1(a^\prime-a)) - \nabla\varphi(a+t_2(a^\prime-a))}_{*}\norm{a^\prime - a} \\
&\leq H\abs{t_1-t_2}\norm{a-a^\prime}^2
\end{align*}
Therefore $\phi$ is an $H\norm{a-a^\prime}^2$-smooth function in $\mathbb{R}$. Apply lemma \ref{lem:1d_H_bound} to $\phi$, we have:
\begin{equation*}
\left(\phi(1)-\phi(0)\right)^2 \leq 6H\norm{a-a^\prime}^2\left(\phi(1)+\phi(0)\right)(1-0)^2
\end{equation*}
which is the same as $(\varphi(a)-\varphi(a^\prime))^2\leq 6H(\varphi(a)+\varphi(a^\prime))\norm{a-a^\prime}^2$
\end{proof}
\subsection{Proof of lemma \ref{lem:sto_det}}
\begin{lemma_statement}{(Lemma \ref{lem:sto_det})}
	Given any starting state $s_0$, sequentially execute $\pi_{det}$ and $\pi_{sto}$ to obtain two separate trajectories $\A = \{a_t\}_{t=1}^T$ and $\tilde{\A} = \{\tilde{a_t}\}_{t=1}^T$ such that $a_t = \pi_{det}(s_t)$ and $\tilde{a}_t = \pi_{sto}(\tilde{s}_t)$, where $s_t = [x_t,a_{t-1}]$ and $\tilde{s}_t = [x_t,\tilde{a}_{t-1}]$. Assuming the policies are stable as per Condition \ref{cond:stability_condition}, we have $\mathbb{E}_{\tilde{\A}}[\tilde{a}_t] = a_t \enskip\forall t = 1,\ldots,T$, where the expectation is taken over all random roll-outs of $\pi_{sto}$.
\end{lemma_statement}
\begin{proof}
	Given a starting state $s_0$, we prove by induction that $\mathbb{E}_{\tilde{\A}}[\tilde{a}_t] = a_t$. 
	
	It is easily seen that the claim is true for $t=1$. 
	
	Now assuming that $\mathbb{E}_{\tilde{\A}}[\tilde{a}_{t-1}] = a_{t-1}$. We have 
	\begin{align*}
\mathbb{E}_{\tilde{\A}}[\tilde{a}_t] &= \mathbb{E}_{\tilde{\A}}[\mathbb{E}[\tilde{a}_t|\tilde{s}_t]] \\
&= \mathbb{E}_{\tilde{\A}}[\beta\pih(\tilde{s}_t)+(1-\beta)\pi(\tilde{s}_t)] \\ 
&= \beta\mathbb{E}_{\tilde{\A}}[\pih(\tilde{s}_t)] + (1-\beta)\mathbb{E}_{\tilde{\A}}[\pi(\tilde{s}_t)]
	\end{align*}
	Thus:
	\begin{align*}
	\norm{\mathbb{E}_{\tilde{\A}}[\tilde{a}_t] - a_t} &= \norm{\mathbb{E}_{\tilde{\A}}[\tilde{a}_t]-\beta\pih(s_t)-(1-\beta)\pi(s_t)} \nonumber \\
	&=\Vert\beta\mathbb{E}_{\tilde{\A}}[\pih(\tilde{s}_t)] + (1-\beta)\mathbb{E}_{\tilde{\A}}[\pi(\tilde{s}_t)] \\ &\qquad\qquad -\beta\pih(s_t)-(1-\beta)\pi(s_t) \Vert \nonumber\\
	&\leq \beta\norm{\mathbb{E}_{\tilde{\A}}[\pih(\tilde{s}_t)] -\pih(s_t)} \nonumber\\ &\qquad+(1-\beta)\norm{\mathbb{E}_{\tilde{\A}}[\pi(\tilde{s}_t)]-\pi(s_t)} \nonumber \\
	&\leq \beta\norm{\mathbb{E}_{\tilde{\A}}[\tilde{a}_{t-1}]-a_{t-1}} \nonumber\\
	&\qquad+ (1-\beta)\norm{\mathbb{E}_{\tilde{\A}}[\tilde{a}_{t-1}]-a_{t-1}} \nonumber \\
	&=0 \nonumber
	\end{align*}
	per inductive hypothesis. Therefore we conclude that $\mathbb{E}_{\tilde{\A}}[\tilde{a}_t] = a_t \enskip\forall t = 1,\ldots,T$
\end{proof}
\subsection{Proof of theorem \ref{policy_improvement} and corollary \ref{cor:monotonic_improvement} - Main policy improvement results}
In this section, we provide the proof to theorem \ref{policy_improvement} and corollary \ref{cor:monotonic_improvement}. 
\begin{theorem_statement}{(theorem \ref{policy_improvement})}
	Assume $\ell$ is convex and $L$-Lipschitz-continuous, and Condition \ref{cond:self-smooth} holds. Let $\epsilon = \max\limits_{s\sim d_\pi}\norm{\pih(s) - \pi(s)}$.  Then for $\beta\in(0,1)$: 
	\begin{equation}
	\ell_{\pip}(\pip) - \ell_{\pi}(\pi) \leq \frac{\beta\gamma\epsilon L}{(1-\beta)(1-\gamma)} + \beta(\ell_{\pi}(\pih) - \ell_{\pi}(\pi)). \nonumber
	\end{equation}
\end{theorem_statement}
\begin{proof}

First let's review the notations: let $T$ be the trajectory horizon. For a policy $\pi$ in the deterministic policy class $\Pi$, given a starting state $s_0$, we roll out the full trajectory $s_0 \xrightarrow{\pi} s_1 \xrightarrow{\pi}\ldots \xrightarrow{\pi} s_T$, where $s_t = \left[x_t,\pi(s_{t-1})\right]$, with $x_t$ encodes the featurized input at current time $t$, and $\pi(s_{t-1})$ encodes the dependency on previous predictions. Let $\ell(\pi(s))$ be the loss of taking action $\pi(s)$ at state $s$, we can define the trajectory loss of policy $\pi$ from starting state $s_0$ as 
\begin{equation*}
\ell(\pi |s_0) = \frac{1}{T}\sum_{t=1}^T\ell(\pi(s_t))
\end{equation*}
For a starting state distribution $\mu$, we define policy loss of $\pi$ as the expected loss along trajectories induced by $\pi$: $\ell_\pi(\pi) = \mathbb{E}_{s_0\sim\mu}[ \ell(\pi | s_0 )]$.  Policy loss $\ell_\pi(\pi)$ can be understood as 
\begin{equation*}
\ell_\pi(\pi) = \int\limits_{s_0\sim\mu}\mathop{\mathbb{E}}\limits_{x_t\sim\mathcal{X}}\frac{1}{T}\left[ \sum_{t=1}^T \ell(\pi(s_t))\right]d\mu(s_0)
\end{equation*}
To prove policy improvement, we skip the subscript of algorithm \ref{algo:simile} to consider general policy update rule within each iteration: 
\begin{equation}
\pi^\prime = \pi_{new}= \beta \pih + (1-\beta)\pi
\label{update_rule}
\end{equation}
where $\pi = \pi_{old}$ is the current policy (combined up until the previous iteration), $\pih$ is the trained model from calling the base regression routine $\train(\mathbf{S}, \widehat{\A} | h)$. Learning rate (step-size) $\beta$ may be adaptively chosen in each iteration. Recall that this update rule reflects deterministic interpolation of two policies.

We are interested in quantifying the policy improvement when updating $\pi$ to $\pi^\prime$. Specifically, we want to bound 
\begin{equation*}
\Gamma = \ell_{\pip}(\pi^\prime) - \ell_{\pi}(\pi)
\end{equation*}
where $\ell_{\pi}(\pi)$ (respectively $\ell_{\pi^\prime}(\pi^\prime)$) denotes the trajectory loss of $\pi$ (respectively $\pi^\prime$) on the state distribution induced by $\pi$ (resp. $\pi^\prime$)


We will bound the loss difference of old and new policies conditioned on a common starting state $s_0$. Based on update rule \eqref{update_rule}, consider rolling out $\pi^\prime$ and $\pi$ from the same starting state $s_0$ to obtain two separate sequences $\pi^\prime \longmapsto \{ s_0 \rightarrow s_1^\prime \ldots \rightarrow s_T^\prime \}$ and $\pi \longmapsto \{ s_0 \rightarrow s_1 \ldots \rightarrow s_T \}$ corresponding to the same stream of inputs $x_1,\ldots,x_T$. 
\begin{align}
\Gamma(s_0) &= \frac{1}{T}\sum_{t=1}^T \ell(\pi^\prime(s_t^\prime)) - \ell(\pi(s_t)) \nonumber \\
&= \frac{1}{T}\sum_{t=1}^T \ell(\pi^\prime(s_t^\prime)) - \ell(\pip(s_t)) + \ell(\pip(s_t)) - \ell(\pi(s_t))
\label{eqn:delta_bound}
\end{align}
Assume convexity of $\ell$ (e.g. sum of square losses):
\begin{align*}
\ell(\pi^\prime(s_t)) &= \ell(\beta\pih(s_t) + (1-\beta)\pi(s_t)) \\
&\leq \beta\ell(\pih(s_t)) + (1-\beta)\ell(\pi(s_t))
\end{align*}
Thus we can begin to bound individual components of $\Gamma(s_0)$ as 
\begin{align*}
\ell(\pi^\prime(s_t^\prime)) - \ell(\pi(s_t)) &\leq \ell(\pip(s_t^\prime))) - \ell(\pip(s_t)) \\ &\quad +\beta \left[ \ell(\pih(s_t)) - \ell(\pi(s_t)) \right]
\end{align*}
Since $\ell$ is $L$-Lipschitz continuous, we have
\begin{align}
\ell(\pip(s_t^\prime))-\ell(\pip(s_t)) &\leq L\norm{\pip(s_t^\prime) - \pip(s_t)} \nonumber \\
&\leq L\gamma\norm{s_t^\prime - s_t}
\label{eqn:pi_prime_bound}
\end{align}
where (\ref*{eqn:pi_prime_bound}) is due to the smoothness condition [\ref{cond:self-smooth}] of policy class $\Pi$. Given a policy class $\Pi$ with $\gamma < 1$, the following claim can be proved by induction:
\\ \underline{\textbf{Claim:}} $\norm{s_t^\prime - s_t} \leq \frac{\beta\epsilon}{(1-\beta)(1-\gamma)}$  
\begin{proof}
For the base case, given the same starting state $s_0$, we have $s_1^\prime = \left[x_1,\pip(s_0)\right]$ and $s_1 = \left[x_1,\pi(s_0)\right]$. Thus $\norm{s_1^\prime - s_1} = \norm{\pip(s_0) - \pi(s_0)} = \norm{\beta\pih(s_0)+(1-\beta)\pi(s_0) - \pi(s_0)} = \beta\norm{\pih(s_0)-\pi(s_0)}\leq \beta\epsilon \leq \frac{\beta\epsilon}{(1-\beta)(1-\gamma)}$.

In the inductive case, assume we have $\norm{s_{t-1}^\prime - s_{t-1}} \leq \frac{\beta\epsilon}{(1-\beta)(1-\gamma)}$. Then similar to before, the definition of $s_t^\prime$ and $s_t$ leads to
\begin{align*}
\norm{s_t^\prime - s_t} &= \norm{\left[x_t,\pip(s_{t-1}^\prime)\right] - \left[x_t,\pi(s_{t-1})\right]} \\
&= \norm{\pip(s_{t-1}^\prime) - \pi(s_{t-1})} \\
&\leq \norm{\pip(s_{t-1}^\prime) - \pip(s_{t-1})} + \norm{\pip(s_{t-1}) - \pi(s_{t-1})} \\
&\leq \gamma\norm{s_{t-1}^\prime - s_{t-1}} + \beta\norm{\pih(s_{t-1})-\pi(s_{t-1})} \\
&\leq \gamma\frac{\beta\epsilon}{(1-\beta)(1-\gamma)}+\beta\epsilon \\
&\leq \frac{\beta\epsilon}{(1-\beta)(1-\gamma)}
\end{align*}
\end{proof}
Applying the claim to equation (\ref{eqn:pi_prime_bound}), we have
\begin{equation}
\ell(\pip(s_t^\prime)) - \ell(\pip(s_t)) \leq \frac{\beta\gamma\epsilon L}{(1-\beta)(1-\gamma)} \nonumber
\end{equation}
which leads to
\begin{align}
\ell(\pip(s_t^\prime) - \ell(\pi(s_t))) &\leq \frac{\beta\gamma\epsilon L}{(1-\beta)(1-\gamma)} \nonumber \\ 
&\quad+ \beta(\ell(\pih(s_t)) -\ell(\pi(s_t)))
\label{eqn:one_state_bound}
\end{align}
Integrating \eqref{eqn:one_state_bound} over the starting state $s_0\sim\mu$ and input trajectories $\{x_t\}_{t=1}^T$, we arrive at the policy improvement bound:
\begin{equation}
\ell_{\pip}(\pip) - \ell_{\pi}(\pi) \leq \frac{\beta\gamma\epsilon L}{(1-\beta)(1-\gamma)} + \beta(\ell_{\pi}(\pih) - \ell_{\pi}(\pi)) \nonumber
\end{equation}
where $\ell_{\pi}(\pih)$ is the expected loss of the trained policy $\pih$ on the state distribution induced by policy $\pi$ (reduction term, analogous to policy advantage in the traditional MDP terminologies \cite{kakade2002approximately})
\end{proof}
This means in the worst case, as we choose $\beta\rightarrow 0$, we have $\left[\ell_{\pip}(\pi^\prime) - \ell_{\pi}(\pi)\right] \rightarrow 0$, meaning the new policy does not degrade much for a small choice of $\beta$. However if $\ell_\pi(\pih) - \ell_\pi(\pi) \ll 0$, we can choose $\beta$ to enforce monotonic improvement of the policy by adaptively choosing $\beta$ that minimizes the right-hand side. In particular, let the reduction term be $\Delta = \ell_\pi(\pi) - \ell_{\pi}(\pih) > 0$ and let $\delta = \frac{\gamma\epsilon L}{1-\gamma}$, then for $\beta =\frac{\Delta-\delta}{2\Delta}$we have the following monotonic policy improvement:
\begin{equation*}
\ell_{\pip}(\pip) - \ell_{\pi}(\pi) \leq -\frac{(\Delta-\delta)^2}{2(\Delta+\delta)}
\end{equation*}
\subsection{Proof of theorem \ref{first_theorem} - $T$-dependent improvement}
\begin{theorem_statement}{(theorem \ref{first_theorem})}
	Assume $\ell$ is convex and $L$-Lipschitz, and Condition \ref{cond:stability_condition} holds. Let $\epsilon = \max\limits_{s\sim d_\pi}\norm{\pih(s) - \pi(s)}$. Then: 
	\begin{equation}
	\ell_{\pi^\prime}(\pi^\prime) - \ell_{\pi}(\pi) \leq \beta\epsilon LT + \beta\left(\ell_{\pi}(\pih) - \ell_{\pi}(\pi)\right). \nonumber
	\end{equation}
	In particular, choosing $\beta\in(0,1/T)$ yields:
	\begin{equation}
	\ell_{\pi^\prime}(\pi^\prime) - \ell_{\pi}(\pi) \leq \epsilon L + \beta\left(\ell_{\pi}(\pih) - \ell_{\pi}(\pi)\right). \nonumber
	\end{equation}
\end{theorem_statement}
\begin{proof}
	The proof of theorem \ref{first_theorem} largely follows the structute of theorem \ref{policy_improvement}, except that we are using the slighty weaker Condition \ref{cond:stability_condition} which leads to weaker bound on the policy improvement that depends on the trajectory horizon $T$. 
	For any state $s_0$ taken from the starting state distribution $\mu$, sequentially roll-out policies $\pip$ and $\pi$ to receive two separate trajectories $\pip: s_0\rightarrow s^\prime_1 \rightarrow \ldots \rightarrow s^\prime_T$ and $\pip: s_0\rightarrow s_1 \rightarrow \ldots \rightarrow s_T$. Consider a pair of states $s^\prime_t = [x_t, \pip(s^\prime_{t-1})]$ and $s_t = [x_t, \pi(s_{t-1})]$ corresponding to the same input feature $x_t$, as before we can decompose $\ell(\pip(s^\prime_t)) - \ell(\pi(s_t)) = \ell(\pip(s^\prime_t)) - \ell(\pip(s_t)) + \ell(\pip(s_t)) -\ell(\pi(s_t)) \leq L\norm{\pip(s^\prime_t) - \pip(s_t)} + \beta(\ell(\pih(s_t)) - \ell(\pi(s_t)))$ due to convexity and $L$-Lipschitz continuity of $\ell$.
	
	Condition \ref{cond:stability_condition} further yields:
	$\ell(\pip(s^\prime_t)) - \ell(\pi(s_t)) \leq L\norm{s^\prime_t - s_t} + \beta(\ell(\pih(s_t)) - \ell(\pi(s_t)))$.
	By the construction of the states, note that 
	\begin{align*}
\norm{s^\prime_t - s_t} &= \norm{\pip(s^\prime_{t-1}) - \pi(s_{t-1})} \\ 
&\leq \norm{\pip(s^\prime_{t-1})-\pip(s_{t-1})} + \norm{\pip(s_{t-1}) - \pi(s_{t-1})} \\ 
&\leq \norm{s^\prime_{t-1} - s_{t-1}}+\beta(\norm{\pih(s_{t-1})-\pi(s_{t-1})}) \\ 
&\leq \norm{s^\prime_{t-1} - s_{t-1}} + \beta\epsilon
	\end{align*}
	(by condition \ref{cond:stability_condition} and definition of $\epsilon$).
	
	From here, one can use this recursive relation to easily show that $\norm{s^\prime_t - s_t} \leq \beta\epsilon t$ for all $t \in [1,T]$.
	
	Averaging over the $T$ time steps and integrating over the starting state distribution, we have:
	\begin{align*}
\ell_{\pi^\prime}(\pi^\prime) - \ell_{\pi}(\pi) &\leq \beta\epsilon L(T+1)/2 + \beta(\ell_{\pi}(\pih) - \ell_{\pi}(\pi)) \\
&\leq \beta\epsilon LT + \beta(\ell_{\pi}(\pih) - \ell_{\pi}(\pi))
	\end{align*} 
	In particular, $\beta\in(0,1/T)$ yields $\ell_{\pi^\prime}(\pi^\prime) - \ell_{\pi}(\pi) \leq \epsilon L+\beta(\ell_{\pi}(\pih) - \ell_{\pi}(\pi))$.
\end{proof}
\subsection{Proof of proposition \ref{prop:smooth_expert} - smooth expert proposition }
\begin{prop_statement}{(Proposition \ref{prop:smooth_expert})}
	Let $\omega$ be the average supervised training error from $\mathcal{F}$, i.e. $\omega = \min\limits_{f\in\mathcal{F}}\mathbb{E}_{x\sim\mathcal{X}}\left[\norm{f([x,0])-a^*}\right]$. Let the rolled-out trajectory of current policy $\pi$ be $\{a_t\}$. If the average gap between $\pi$ and $\pis$ is such that $\mathbb{E}_{t\sim \text{Uniform}[1:T]}\left[\norm{a_t^*-a_{t-1}}\right] \geq 3\omega+\eta(1+\lambda)$, then using $\{a_t^*\}$ as feedback will cause the trained policy $\pih$ to be non-smooth, i.e.:
	\begin{equation}
	\mathbb{E}_{t\sim \text{Uniform}[1:T]}\left[\norm{\hat{a}_t-\hat{a}_{t-1}}\right] \geq \eta, \nonumber
	\end{equation}
	for $\{\hat{a}_t\}$ the rolled-out trajectory of $\pih$.
\end{prop_statement}
\begin{proof}
Recall that $\Pi_{\lambda}$ is formed by regularizing a class of supervised learners $\mathcal{F}$ with the singleton class of smooth function $\mathcal{H} \triangleq \{ h(a) = a\}$, via a hyper-parameter $\lambda$ that controls the trade-off between being close to the two classes. 

Minimizing over $\Pi_{\lambda}$ can be seen as a regularized optimization problem:
\begin{align}
\pih(x,a) &= \argmin_{\pi\in\Pi}\ell(\pi(\left[x,a\right])) \nonumber
\\&= \argmin_{f\in\mathcal{F}, h\in\mathcal{H}}(f(x,a)-a^*)^2+\lambda(f(x,a)-h(a))^2 \nonumber \\
&= \argmin_{f\in\mathcal{F}}(f(x,a)-a^*)^2+\lambda(f(x,a)-a)^2 \label{eqn:f_objective}
\end{align}
where hyper-parameter $\lambda$ trades-off the distance of $f(x,a)$ relative to $a$ (smoothness) and $a^*$ (imitation accuracy), and $a\in\mathbb{R}^1$. 

Such a policy $\pi$, at execution time, corresponds to the regularized minimizer of:
\begin{align}
a_{t} &= \pi(\left[x,a_{t-1}\right]) \nonumber \\
&= \argmin\limits_{a} \norm{a-f(\left[x_t,a_{t-1}\right])}^2 +\lambda\norm{a-a_{t-1}}^2 \nonumber \\
&= \frac{f(\left[x_t,a_{t-1}\right])+\lambda a_{t-1}}{1+\lambda} \label{eqn:a_objective}
\end{align}
where $f\in\mathcal{F}$ is the minimizer of equation \ref{eqn:f_objective}

Thus we enforce smoothness of learning policy from $\Pi_{\lambda}$ by encouraging low first order difference of consecutive actions of the executed trajectory $\{a_t\}$. In practice, we may contrain this first order difference relative to the human trajectory $\frac{1}{T}\sum_{t=1}^T \norm{a_t - a_{t-1}} \leq \eta$, where $\eta\propto \frac{1}{T}\sum_{t=1}^T\norm{a_t^* - a_{t-1}^*}$.  

Consider any given iteration with the following set-up:  we execute old policy $\pi=\pi_{old}$ to get rolled-out trajectory $\{a_t\}_{t=1}^T$. Form the new data set as $\mathcal{D} = \{(s_t,a_t^*)\}_{t=1}^T$ with predictors $s_t = \left[x_t,a_{t-1}\right]$ and feedback labels simply the human actions $a_t^*$. Use this data set to train a policy $\pih$ by learning a supervised $\hat{f}\in\mathcal{F}$ from $\mathcal{D}$. Similar to $\pi$, the execution of $\pih$ corresponds to $\hat{a}_t$ where:
\begin{align}
\hat{a}_t &= \pih(\left[x_t,\hat{a}_{t-1}\right]) \nonumber \\
&= \argmin\limits_{a} \norm{a-\hat{f}(\left[x_t,\hat{a}_{t-1}\right])}^2 +\lambda\norm{a-\hat{a}_{t-1}}^2 \nonumber \\
&= \frac{\hat{f}(\left[x_t,\hat{a}_{t-1}\right])+\lambda \hat{a}_{t-1}}{1+\lambda} \label{eqn:ahat_objective}
\end{align}
 
 Denote by $f_{0}$ the "naive" supervised learner from $\mathcal{F}$. In other words, $f_0 = \argmin\limits_{f\in\mathcal{F}}\sum\limits_{t=1}^T \norm{f(\left[x_t,0\right]) - a_t^*}^2$. Let $\omega$ be the average gap between human trajectory and the rolled-out trajectory of $f_0$, i.e.
 \begin{equation*}
 \omega = \frac{1}{T}\sum_{t=1}^T \norm{f_0(\left[x_t,0\right]) - a_t^*}
 \end{equation*}
 Note that it is reasonable to assume that the average errors of $f$ and $\hat{f}$ are no worse than $f_0$, since in the worst case we can simply discard the extra features $a_{t-1}$ (resp. $\hat{a}_{t-1}$) of $f$ (resp. $\hat{f}$) to recover the performance of the naive learner $f_0$:
 \begin{align*}
\frac{1}{T}\sum_{t=1}^T\norm{f([x_t,a_{t-1}]) - a_t^*} &\leq \omega \\
\frac{1}{T}\sum_{t=1}^T\norm{\hat{f}([x_t,\hat{a}_{t-1}]) - a_t^*} &\leq \omega
 \end{align*}
 
Assume that the old policy $\pi = \pi_{old}$ is "bad" in the sense that the rolled-out trajectory $\{a_t\}_{t=1}^T$ differs substantially from human trajectory $\{a_t^*\}_{t=1}^T$. Specifically, denote the gap:
\begin{equation*}
\frac{1}{T}\sum_{t=1}^T\norm{a_t^*-a_{t-1}}=\Omega \gg \omega
\end{equation*}
This means the feedback correction $a_t^*$ to $s_t=\left[x_t,a_{t-1}\right]$ is not smooth. We will show that the trained policy $\pih$ from $\mathcal{D}$ will not be smooth. 

From the definition of $a_t$ and $\hat{a}_t$ from equations \ref{eqn:a_objective} and \ref{eqn:ahat_objective}, we have for each $t$:
\begin{equation}
a_t - \hat{a}_{t} = \frac{\lambda}{1+\lambda}(a_{t-1} - \hat{a}_{t-1}) + \frac{f([x_t,a_{t-1}])-\hat{f}([x_t,\hat{a}_{t-1}])}{1+\lambda} \nonumber
\end{equation}
Applying triangle inequality and summing up over $t$, we have:
\begin{equation}
\frac{1}{T}\sum_{t=1}^{T}\norm{a_t-\hat{a}_{t}} \leq 2\omega \nonumber
\end{equation}
From here we can provide a lower bound on the smoothness of the new trajectory $\hat{a}_t$, as defined by the first order difference $\frac{1}{T}\sum_{t=1}^{T}\norm{\hat{a}_t-\hat{a}_{t-1}}$. By definition of $\hat{a}_t$:
\begin{align}
&\norm{\hat{a}_t - \hat{a}_{t-1}} = \norm{\frac{\hat{f}([x_t,\hat{a}_{t-1}]) - \hat{a}_{t-1}}{1+\lambda} } \nonumber \\
&= \norm{\frac{\hat{f}([x_t,\hat{a}_{t-1}]) - a_t^* + a_t^*-a_{t-1}+a_{t-1}- \hat{a}_{t-1}}{1+\lambda} } \nonumber \\
&\geq \frac{\norm{a_t^* - a_{t-1}} -\norm{\hat{f}([x_t,\hat{a}_{t-1}])-a_t^*}  - \norm{a_{t-1} -\hat{a}_{t-1}} }{1+\lambda} \nonumber
\end{align}
Again summing up over $t$ and taking the average, we obtain:
\begin{equation}
\frac{1}{T}\sum_{t=1}^{T}\norm{\hat{a}_t - \hat{a}_{t-1}} \geq \frac{\Omega - 3\omega}{1+\lambda} \nonumber
\end{equation}
Hence for $\Omega\gg\omega$, meaning the old trajectory is sufficiently far away from the ideal human trajectory, setting the learning target to be the ideal human actions will cause the learned trajectory to be non-smooth. 
\end{proof}
\section{Imitation Learning for Online Sequence Prediction With Smooth Regression Forests}
\label{sec:tree}
\subsection{Variant of SIMILE Using Smooth Regression Forest Policy Class}
We provide a specific instantiation of algorithm \ref{algo:simile} that we used for our experiment, based on a policy class $\Pi$ as a smooth regularized version of the space of tree-based ensembles. In particular, $\mathcal{F}$ is the space of random forests and $\mathcal{H}$ is the space of linear auto-regressors $\mathcal{H} \triangleq\{h(a_{t-1:t-\tau})=\sum_{i=1}^\tau c_i a_{t-i} \}$. In combination, $\mathcal{F}$ and $\mathcal{H}$ form a complex tree-based predictor that can predict smooth sequential actions. 

Empirically, decision tree-based ensembles are among the best performing supervised machine learning method \cite{caruana2006empirical,DecisionForests}. Due to the piece-wise constant nature of decision tree-based prediction, the results are inevitably non-smooth. We propose a recurrent extension based on $\mathcal{H}$, where the prediction at the leaf node is not necessarily a constant, but rather is a smooth function of both static leaf node prediction and its previous predictions. By merging the powerful tree-based policy class with a linear auto-regressor, we provide a novel approach to train complex models that can accommodate smooth sequential prediction using model-based smooth regularizer, at the same time leveraging the expressiveness of complex model-free function class (one can similarly apply the framework to the space of  neural networks).
\begin{algorithm}[tb]
	\caption{ Imitation Learning for Online Sequence Prediction with Smooth Regression Forest}
	\label{algo:simile_tree}
	\begin{algorithmic}[1]
		\REQUIRE Input features $\mathbf{X} = \{x_t\}_{t=1}^T$, expert demonstration $\A^* = \{a_t^*\}_{t=1}^T$, base routine $\texttt{Forest}$, past horizon $\tau$, sequence of $\sigma\in(0,1)$ \\
		\STATE Initialize $\A_0 \leftarrow \A^*, \s_0\leftarrow \{\left[x_{t:t-\tau},a_{t-1:t-\tau}^*\right]\} $, \\ $\qquad\qquad h_0 = \argmin\limits_{c_1,\ldots,c_\tau}\sum\limits_{t=1}^T\left(a_t^*-\sum_{i=1}^\tau c_i a_{t-i}^*\right)^2$ \\
		\STATE Initial policy $\pi_0 = \pih_0\leftarrow $\texttt{Forest}$(\mathbf{S}_0,\mathbf{A}_0|\enskip h_0)$ \\
		\FOR{$n = 1,\ldots, N$}
		\STATE $\mathbf{A}_n  = \{a_t^n\} \leftarrow \{\pi_{n-1}(\left[x_{t:t-\tau},a_{t-1:t-\tau}^{n-1}\right])\}$ \\ \hfill{//\textit{sequential roll-out old policy} } \\
		\STATE $\s_n \leftarrow \{s_t^n = \left[x_{t:t-\tau},a_{t-1:t-\tau}^n\right]\} $ \hfill{//\textit{Form states}} \\ \hfill{\textit{in 1d case} } \\
		\STATE $\widehat{\A}_n = \{\widehat{a}_t^n =\sigma a_t^n+(1-\sigma)a_t^*\} \enskip \forall s_t^n\in\mathbf{S}_n$ \\ \hfill{// \textit{collect smooth 1-step feedback}} \\
		\STATE $h_n = \argmin\limits_{c_1,\ldots,c_\tau}\sum\limits_{t=1}^T\left(\hat{a}_t^n-\sum_{i=1}^\tau c_i \hat{a}_{t-i}^n\right)^2$ \hfill{//\textit{update $c_i$}} \\ \hfill{\textit{ via regularized least square}}  \\
		\STATE $\pih_n \leftarrow $\texttt{Forest}$(\mathbf{S}_n, \widehat{\mathbf{A}}_n |\enskip h_n)$ \label{algo:learn2} \hfill{// \textit{train with smooth}} \\ \hfill{\textit{decision forests. See section \ref{sec:smooth_tree}}} \\
		\STATE $\beta \leftarrow \frac{\texttt{error}(\pi)}{\texttt{error}(\pih) + \texttt{error}(\pi)}$ \hfill{//\textit{set $\beta$ to weighted }} \\ \hfill{\textit{empirical errors }} \\
		\STATE $\pi_n = \beta\pih_n + (1-\beta)\pi_{n-1}$ \label{algo:interpolate2} \hfill{// \textit{update policy }} \\
		\ENDFOR \\
		\OUTPUT Last policy $\pi_N$
	\end{algorithmic}
\end{algorithm}
Algorithm \ref{algo:simile_tree}, which is based on SIMILE, describes in more details our training procedure used for the automated camera planning experiment. We first describe the role of the linear autoregressor class $\mathcal{H}$, before discussing how to incorporate $\mathcal{H}$ into decision tree training to make smooth prediction (see the next section). 

The autoregresor $h_\pi(a_{-1},\ldots,a_{-\tau})$ is typically selected from a class of autoregressors $\mathcal{H}$. In our experiments, we use regularized linear autoregressors as $\mathcal{H}$. 

Consider a generic learning policy $\pi$ with a rolled-out trajectory $\A = \{ a_t\}_{t=1}^T$ corresponding to the input sequence $\X = \{ x_t\}_{t=1}^T$.  We form the state sequence $\mathbf{S} = \{ s_t \}_{t=1}^T = \{\left[ x_t,\ldots,x_{t-\tau}, a_{t-1}, \ldots, a_{t-\tau} \right] \}_{t=1}^T$, where $\tau$ indicates the past horizon that is adequate to approximately capture the full state information. We approximate the smoothness of the trajectory $\A$  by a linear autoregressor 
\begin{equation*}
h_\pi \equiv h_\pi(s_t) \equiv \sum_{i=1}^\tau c_i a_{t-i}
\end{equation*}
for a (learned) set of coefficients $\{ c_i \}_{i=1}^\tau$ such that $a_t \approx h_\pi\left( s_t \right)$. Given feedback target $\widehat{\A} = \{ \hat{a}_t\}$, the joint loss function thus becomes
\begin{align*}
\ell(a,\hat{a}_t) &= \ell_d(a,\hat{a}_t) + \lambda \ell_R(a, s_t) 
\\ &= (a-\hat{a}_t)^2+\lambda (a-\sum_{i=1}^\tau c_i a_{t-i})^2
\end{align*} 
Here $\lambda$ trades off between smoothness versus absolute imitation accuracy. The autoregressor $h_\pi$ acts as a smooth linear regularizer, the parameters of which can be updated at each iteration based on feedback target $\widehat{\A}$ according to 
\begin{align}
h_{\pi} &= \argmin_{h\in \mathcal{H}} \norm{\widehat{\A} - h(\widehat{\A})}^2 \nonumber 
\\&= \argmin_{c_1,\ldots, c_\tau}(\sum_{t=1}^T (\hat{a}_t-\sum_{i=1}^\tau c_i \hat{a}_{t-i})^2), \label{eqn: update_c}
\end{align}
In practice we use a regularized version of equation \eqref{eqn: update_c} to learn a new set of coefficients $\{c_i \}_{i=1}^\tau$. The $\texttt{Forest}$ procedure (Line 8 of algorithm 2) would use this updated $h_\pi$ to train a new policy that optimizes the trade-off between $a_t \approx \hat{a}_t$ (feedback) versus smoothness as dictated by $a_t \approx \sum_{i=1}^\tau c_i a_{t-i}$. 

\subsubsection{Smooth Regularization with Linear Autoregressors}
Our application of Algorithm 1 to realtime camera planning proceeds as follows: At each iteration, we form a state sequence $\s$ based on the rolled-out trajectory $\A$ and tracking input data $\X$ such that $s_t = \left[x_t,\ldots,x_{t-\tau},a_{t-1},\ldots,a_{t-\tau}\right]$ for appropriate $\tau$ that captures the history of the sequential decisions.  We generate feedback targets $\widehat{\A}$ based on each $s_t\in\s$ following $\hat{a}_t = \sigma a_t + (1-\sigma)a_t^*$ using a parameter $\sigma\in(0,1)$ depending on the Euclidean distance between $\A$ and $\A^*$. Typically, $\sigma$ gradually decreases to $0$ as the rolled-out trajectory improves on the training set. After generating the targets, a new linear autoregressor $h_\pi$ (new set of coefficients $\{c_i\}_{i=1}^\tau$) is learned based on $\widehat{\A}$ using regularized least squares (as described in the previous section). We then train a new model $\pih$ based on $\s, \widehat{\A}$, and the updated coefficients $\{c_i\}$, using $\texttt{Forest}$ - our recurrent decision tree framework that is capable of generating smooth predictions using autoregressor $h_{\pi}$ as a smooth regularizer (see the following section for how to train smooth decision trees). Note that typically this creates a "chicken-and-egg" problem. As the newly learned policy $\pih$ is greedily trained with respect to $\widehat{\A}$, the rolled-out trajectory of $\pih$ may have a state distribution that is different from what the previously learned $h_\pi$ would predict. Our approach offers two remedies to this circular problem. First, by allowing feedback signals to vary smoothly relative to the current rolled-out trajectory $\A$, the new policy $\pih$ should induce a new autoregresor that is similar to previously learned $h_\pi$. Second, by interpolating distributions (Line 10 of Algorithm 2) and having $\widehat{\A}$ eventually converge to the original human trajectory $\A^*$, we will have a stable and converging state distribution, leading to a stable and converging $h_\pi$. 

Throughout iterations, the linear autoregressor $h_\pi$ and regularization parameter $\lambda$ enforces smoothness of the rolled-out trajectory, while the recurrent decision tree framework $\texttt{Forest}$ learns increasingly accurate imitation policy. We generally achieve a satisfactory policy after 5-10 iterations in our sport broadcasting data sets. In the following section, we describe the mechanics of our recurrent decision tree training. 
\subsection{Smooth Regression Tree Training}
\label{sec:smooth_tree}
Given states $s$ as input, a decision tree specifies a partitioning of the input state space.  Let $D = \{(s_m,\hat{a}_m)\}_{m=1}^M$ denote a training set of state/target pairs.  Conventional regression tree learning aims to learn a partitioning such that each leaf node, $\node$, makes a constant prediction via minimizing the squared loss function:
\begin{align}
	\bar{a}_{\node} &= \argmin_{a} \sum_{(s,\hat{a}) \in D_\node} \ell_{d}(a,\hat{a}) \nonumber
	\\&= \argmin_{a} \sum_{(s,\hat{a}) \in D_{\node}}(\hat{a} - a)^2,\label{eqn:tree1}
\end{align}
where $D_\node$ denotes the training data from $D$ that has partitioned into the leaf $\node$.  For squared loss, we have:
\begin{align}\bar{a}_{\node} = \mean\left\{\hat{a}\left|(s,\hat{a}) \in D_{\node}\right.\right\}.\label{eqn:tree1_predict}\end{align}

In the recurrent extension to $\texttt{Forest}$, we allow the decision tree to branch on the input state $s$, which includes the previous predictions $a_{-1},\ldots,a_{-\tau}$. To enforce more explicit smoothness requirements, let $h_\pi(a_{-1},\ldots,a_{-\tau})$ denote an  autoregressor that captures the temporal dynamics of $\pi$ over the distribution of input sequences $d_\x$, while \textit{ignoring} the inputs $x$. At time step $t$, $h_\pi$ predicts  the behavior $a_t = \pi(s_t)$ given only $a_{t-1},\ldots,a_{t-\tau}$.  

Our policy class $\Pi$ of recurrent decision trees $\pi$  makes smoothed predictions by regularizing the predictions to be close to its autoregressor $h_\pi$. The new loss function incorporates both the squared distance loss $\ell_d$, as well as a smooth regularization loss such that: 
\begin{align*}
	\mathcal{L}_D(a) &= \sum_{(s,\hat{a}) \in D} \ell_{d}(a,\hat{a}) + \lambda\ell_{R}(a,s) 
	\\ &= \sum_{(s,\hat{a}) \in D} (a-\hat{a})^2 + \lambda(y-h_{\pi}(s))^2
\end{align*}
where $\lambda$ is a hyper-parameter that controls how much we care about smoothness versus absolute distance loss. 

\textbf{Making prediction:} For any any tree/policy $\pi$, each leaf node is associated with the terminal leaf node value $\bar{a}_\node$ such that prediction $\tilde{a}$ given input state $s$ is:
\begin{align}
	\tilde{a}(s) \equiv \pi(s) &= \argmin_a~(a- \bar{a}_{\node(s)})^2 + \lambda(a - h_\pi(s))^2,\label{eqn:tree2_predict}
	\\ &= \frac{\bar{a}_{\node(s)} + \lambda h_\pi(s)}{1+\lambda}.\label{eqn:tree2_predictb}
\end{align}
where $\node(s)$ denotes the leaf node of the decision tree that $s$ branches to.

\textbf{Setting terminal node value:} 
Given a fixed $h_\pi$ and decision tree structure, navigating through consecutive binary queries eventually yields a terminal leaf node with associated training data $D_\node \subset D$. 

One option is to set the terminal node value $\bar{a}_\node$ to satisfy:
\begin{align}
	\bar{a}_{\node} &= \argmin_{a}\sum_{(s,\hat{a}) \in D_\node} \ell_d(\tilde{a}(s|a), \hat{a}) \nonumber
	\\ &= \argmin_{a} \sum_{(s,\hat{a}) \in D_\node}(\tilde{a}(s|a)-\hat{a})^2
	\label{eqn:tree2}
	\\ &= \argmin_{a} \sum_{(s,\hat{a}) \in D_\node} \left(\frac{a+\lambda h_\pi(s)}{1+\lambda}-\hat{a}\right)^2 \nonumber
\end{align}
for $\tilde{a}(s|a)$ defined as in \eqref{eqn:tree2_predictb} with $a\equiv \bar{a}_{\node(s)}$.  Similar to \eqref{eqn:tree1_predict}, we can write the closed-form solution of \eqref{eqn:tree2} as:
\begin{eqnarray}
	\begin{small}
		\bar{a}_{\node} = \texttt{mean}\left\{(1+\lambda)\hat{a} - \lambda h_\pi(s) \left| (s,\hat{a}) \in  D_{\node}\right.\right\}.
	\end{small}
	\label{eqn:tree2_learn}
\end{eqnarray}
When $\lambda=0$,  \eqref{eqn:tree2_learn} reduces to \eqref{eqn:tree1_predict}.

Note that \eqref{eqn:tree2} only looks at imitation loss $\ell_d$, but not smoothness loss $\ell_R$.  Alternatively in the case of joint imitation and smoothness loss, the terminal leaf node is set to minimize the joint loss function:

\begin{align}
	&\bar{a}_{\node} = \argmin_{a} \mathcal{L}_{D_{\node}}(\tilde{a}(s|a)) \nonumber
	\\ &= \argmin_{a}\sum_{(s,\hat{a}) \in D_\node} \ell_d(\tilde{a}(s|a), \hat{a}) + \lambda\ell_R(\tilde{a}(s|a), s) \nonumber
	\\ &= \argmin_{a} \sum_{(s,\hat{a}) \in D_\node}(\tilde{a}(s|a)-\hat{a})^2 + \lambda (\tilde{a}(s|a)-h_\pi(s))^2
	\label{eqn:tree3}
	\\ &=\argmin_{a}\sum_{(s,\hat{a}) \in D_\node} \left(\frac{a + \lambda h_{\pi}(s)}{1+\lambda}-\hat{a}\right)^2 \nonumber
	\\ &+\lambda\left(\frac{a+\lambda h_\pi(s)}{1+\lambda}-h_\pi(s)\right)^2 \nonumber
	\\ &= \texttt{mean}\left\{\hat{a}\left| (s,\hat{a}) \in  D_{\node}\right.\right\}, \label{eqn:tree3_learn}
\end{align}

\textbf{Node splitting mechanism:} 
For a node representing a subset $D_\node$ of the training data, the node impurity is defined as:
\begin{align*} 
	I_{\node} &= \mathcal{L}_{D_\node}(\bar{a}_\node) 
	\\ &= \sum_{(s,\hat{a}) \in D_\node} \ell_d(\bar{a}_\node, \hat{a}) + \lambda\ell_R(\bar{a}_\node, s)
	\\ &= \sum_{(s,\hat{a}) \in D_\node} (\bar{a}_\node - \hat{a})^2 + \lambda(\bar{a}_\node-h_\pi(s))^2
\end{align*}
where $\bar{a}_\node$ is set according to equation \eqref{eqn:tree2_learn} or \eqref{eqn:tree3_learn} over $(s,\hat{a})$'s in $D_\node$. At each possible splitting point where $D_\node$ is partitioned into $D_{\texttt{left}}$ and $D_{\texttt{right}}$, the impurity of the left and right child of the node is defined similarly. As with normal decision trees, the best splitting point is chosen as one that maximizes the impurity reduction: $I_{\node} - \frac{\abs{D_{\texttt{left}}}}{\abs{D_\node}}I_{\texttt{left}} - \frac{\abs{D_{\texttt{right}}}}{\abs{D_\node}}I_{\texttt{right}} $

\end{document}